\PassOptionsToPackage{numbers,sort&compress}{natbib}
\documentclass{article}
\usepackage{myarxiv}
\usepackage{amsthm}
\usepackage{amsmath}

\DeclareMathOperator*{\argmax}{arg\,max}

\theoremstyle{definition}

\newtheorem{theorem}{Theorem}
\newtheorem{definition}[theorem]{Definition}
\newtheorem{proposition}[theorem]{Proposition}

\newtheorem{lemma}[theorem]{Lemma}
\usepackage{subcaption}
\usepackage{graphicx}
\usepackage{booktabs}
\usepackage{float}
\usepackage{tabularx}
\usepackage[utf8]{inputenc}
\usepackage[T1]{fontenc}
\usepackage{hyperref}
\usepackage{url}
\usepackage{bbm}
\usepackage{booktabs}
\usepackage{amsfonts}
\usepackage{nicefrac}
\usepackage{microtype}
\usepackage{xcolor}

\usepackage{enumitem}

\title{Stress-Testing Neural Network Verifiers with Provably Robust Instances}

\author{
David Troxell$^{1}$ \And
Yulia Alexandr$^{2}$ \And
Sofia Hunt$^{1}$ \And
Stephanie Lei$^{1}$ \And
Guido Mont\'ufar$^{1,2,3}$ \\
\\
$^1$Department of Statistics \& Data Science, University of California, Los Angeles \\
$^2$Department of Mathematics, University of California, Los Angeles \\
$^3$Max Planck Institute for Mathematics in the Sciences, Leipzig}
\begin{document}

\maketitle

\begin{abstract}
Neural network verifiers aim to provide formal guarantees on model behavior, but existing verification benchmarks are fundamentally limited by their lack of ground-truth labels. As a result, verifier evaluation relies on indirect heuristics, which prevents exact scoring and systematic study of verifier failure modes. We address this gap by introducing a reusable framework for generating verification instances whose ground-truth robustness labels are known a priori through analytic construction. Our framework led to the discovery of multiple numeric tolerance concerns and an implementation bug in popular verifiers, highlighting the need for ground-truth labels. Additionally, to systematically study verifier failure modes, we introduce the verification Difficulty Profile, a collection of estimable quantities capturing distinct sources of instance hardness. Using our framework and these profiles, we evaluate five state-of-the-art verifiers and show that different instances stress distinct aspects of the verification pipeline. We show that these results can aid the future development of verifiers as they provide actionable targets for improving numerical reliability, relaxation quality, and search behavior. Our code is publicly available: \url{https://github.com/dtroxell19/VeriStressGT.git}.
\end{abstract}

\section{Introduction}
Deep learning systems are rapidly being integrated into safety-critical applications in society, including AI-enabled medical devices, self-driving vehicles, power grid balancing for energy systems, and quality control for chemical processes \citep{stanford_ai_index_2025, ozcanli2020deep, yu2023challenges}. 
However, these systems often exhibit fragile and unpredictable behavior as small input perturbations can induce large and unintended changes in outputs \citep{drenkow2022systematicreviewrobustnessdeep, Meng_2024, LIU2023175}. 
In applications where unexpected fluctuations in model behavior have dire consequences, such instability raises a fundamental question of how to guarantee that a trained model behaves safely under all admissible inputs.

Neural network verification aims to answer this question by providing formal guarantees on model behavior. Given a trained model, an admissible input set, and a desired property, verifier systems attempt to certify whether the property holds for all inputs in the set. 
This work focuses on robustness specifications, where the goal is to certify stable model behavior under bounded input perturbations. 
A wide array of verification techniques have been developed in recent years, including Satisfiability Modulo Theory (SMT) and Mixed Integer Programming (MIP) approaches \citep{katz2017reluplex, tjeng2017evaluating}, convex relaxation methods \citep{wong2018scaling}, and scalable branch-and-bound frameworks such as $\alpha,\beta$-CROWN \citep{wang2021betacrownefficientboundpropagation, xu2020automatic}. 
Modern verifiers can now handle networks with millions of parameters and increasingly complex architectures. 

However, despite algorithmic progress, the evaluation of neural network verifiers themselves remains fundamentally limited. Specifically, existing benchmarks generally lack ground-truth labels indicating whether a verification instance is truly robust or non-robust, and they offer little characterization of instance difficulty beyond verifier runtime. These limitations have three primary, direct consequences: 

First, without ground-truth labels, verifier evaluation currently relies on indirect heuristics. For example, VNN-COMP, the annual competition for neural network verification, uses majority voting when verifiers disagree on 
a model instance due to numerical tolerance issues~\citep{kaulen20256th}. Additionally, when no verifier finds a valid robustness counterexample and instead claims robustness or times out, the instance is treated as robust under an assumption of verifier soundness~\citep{kaulen20256th}. 
However, this assumption may be more fragile than anticipated as recent works have found bugs and imprecisions in various verifier implementations \citep{zhou2024soundnessbench,DBLP:journals/corr/abs-2003-03021,zombori2021fooling}. 

Second, without ground-truth labels, benchmarks cannot systematically stress-test the ability of verifiers to certify increasingly difficult robust instances. Non-robust instances are often quickly exposed by adversarial search methods such as projected gradient descent, making them relatively easier to classify \cite{madry2019deeplearningmodelsresistant}. The harder and more informative regime is robust-but-difficult instances, where a verifier must prove that no counterexample exists. Recent work makes existing benchmarks harder while preserving their original ground truth by destabilizing ReLU activations~\citep{relusplitter}. However, this approach inherits labels from the input benchmark and only varies one difficulty mechanism. Benchmarks therefore still lack a general way to generate robust instances with verifier-independent labels and varied, targeted knobs for different bottlenecks.


Third, current verifier benchmark outcomes alone often provide limited diagnostic information about why a verifier succeeds, fails, or times out. Prior work has shown that performance is related to coarse factors such as network architecture and input dimension~\citep{GDVB}. However, these high-level factors provide less insight into whether a verifier is limited by loose relaxations, complex local geometry, or other bottlenecks. Without targeted instances that isolate such bottlenecks, it is also difficult to evaluate verifiers designed to address a specific failure mode.

In this work, we address these gaps by introducing VeriStress-GT (Verifier Stress-Testing via Ground Truth), a modular and re-usable framework to generate benchmark instances for neural network verification with analytically proven ground-truth robustness labels. Moreover, we design these constructions to have controllable difficulty, enabling gradual systematic stress-testing of verification methods for the first time. To guide this process and better analyze verifier failure modes, we introduce the Difficulty Profile, a collection of estimable quantities that characterize verification instance hardness. More broadly, this work reframes verification benchmarking from heuristic evaluation to controlled, ground-truth experimentation. Our primary contributions are as follows: 

\begin{itemize}[leftmargin=*]

    \item We introduce VeriStress-GT, to our knowledge the first reusable framework for generating neural network verification instances with verifier-independent ground-truth robustness labels and controllable difficulty for systematic verifier stress-testing. 
    
    \item We propose the verification instance Difficulty Profile, a unified framework for characterizing verification instance hardness. We show that the profile components enable the systematic evaluation of verifier scalability and failure modes. 
    
    \item We evaluate five state-of-the-art verifiers using VeriStress-GT and Difficulty Profiles, revealing distinct limitations across verification paradigms and identifying previously unreported numerical and implementation issues. 
\end{itemize}

\section{Robust Constructions for Neural Network Verification Benchmarking} 
\label{sec:robust_constructions} 

We first define the robustness verification instance as the tuple $(f, x_0, y, \mathcal{B}_\epsilon)$ consisting of a trained neural network $f : \mathbb{R}^d \to \mathbb{R}^c$, a nominal input $x_0 \in \mathbb{R}^d$ with correct class $y \in [c]$, and a perturbation set $\mathcal{B}_\epsilon(x_0) = \{x : \|x - x_0\|_p \leq \epsilon\}$. In this work we specifically study verification for network robustness; in other words, does $f$ classify every input in $\mathcal{B}_\epsilon(x_0)$ identically to $x_0$? 
Define the per-class margin functions $\mu_k(x) = f_y(x) - f_k(x)$ for $k \in [c]$, $k \neq y$ and the overall margin $\mu(x) = \min\limits_{k \neq y} \mu_k(x)$. 
Then, we state that $f$ is provably robust at $x_0$ if and only if $\min\limits_{x \in \mathcal{B}_\epsilon(x_0)} \mu(x) > 0$. 

Rather than defining a fixed set of benchmark instances, VeriStress-GT defines a novel collection of constructors, each of which produces verification instances with analytically proven robustness labels. 
Each constructor targets a particular architectural class or verifier bottleneck, allowing the benchmark to probe different sources of verification difficulty. 
Figure~\ref{fig:bench_overview} provides a high-level overview of the constructors and framework design principles. 
We briefly describe each of the constructors within the framework below and include constructor details in Appendix~\ref{app:constructor_details}. 

\begin{figure}[t]
    \centering
    \includegraphics[width=.82\linewidth]{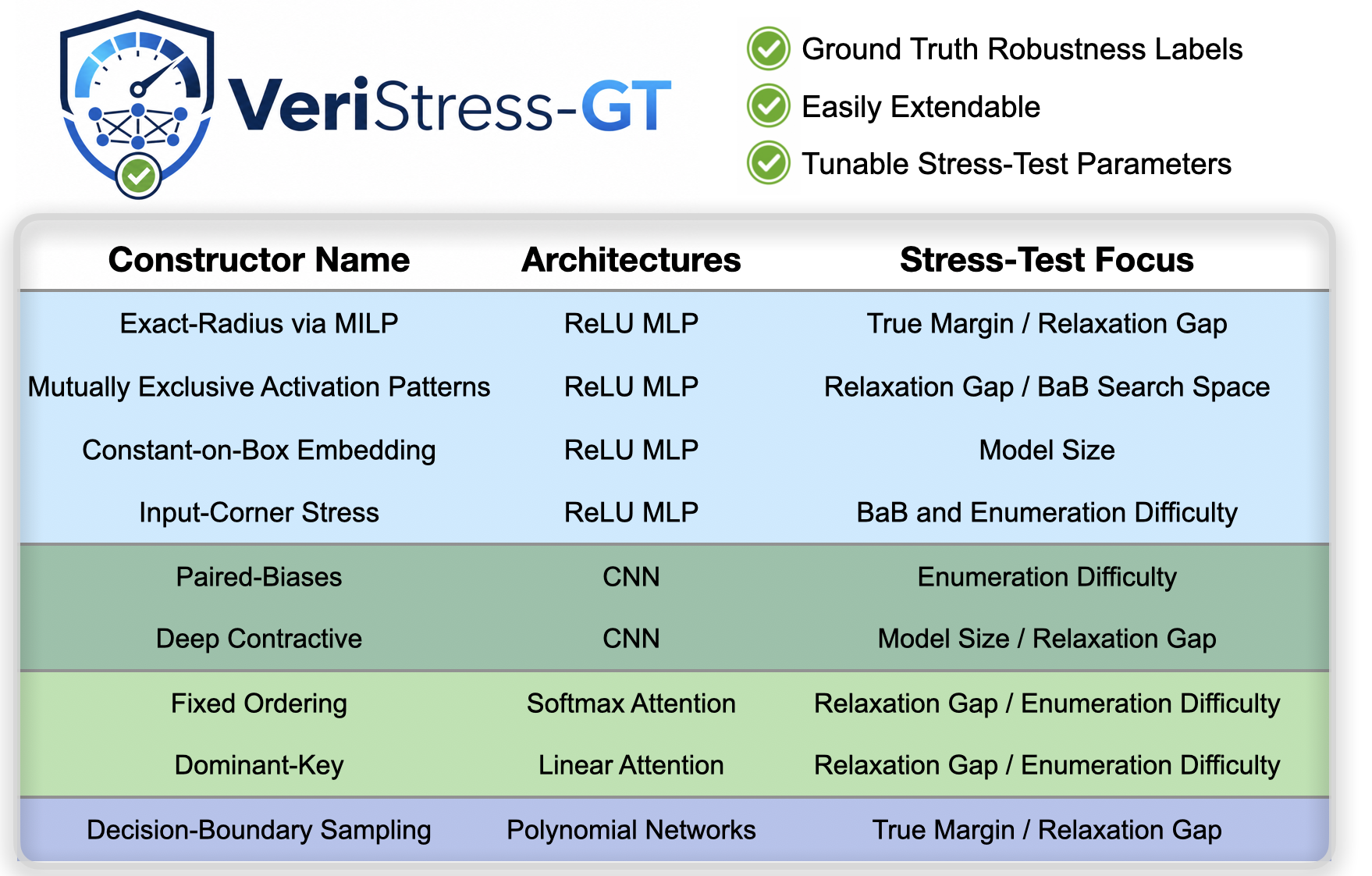}
    \caption{High-level overview of VeriStress-GT, a benchmark for stress-testing neural network verifiers. Benchmark instances are generated via various constructors, or methods to generate provably robust instances. New constructors can easily be added to the framework, and each constructor admits difficulty parameters that can gradually increase verification difficulty while retaining robustness.}
    \label{fig:bench_overview}
\end{figure}

\subsection{ReLU Network Constructors}

Throughout this section, let $\sigma(t)=\max\{t,0\}$ denote the ReLU activation function applied elementwise. A ReLU MLP is a composition of affine maps and ReLU, producing logits $f_1(x),\dots,f_c(x)$. 

\textbf{Exact-Radius via Mixed-Integer Linear Programming (MILP).} 
We generate near-boundary robust instances by computing the network's exact robustness radius using a standard mixed-integer linear programming (MILP) encoding of ReLU networks \citep{DBLP:journals/corr/abs-1711-07356}. Specifically, for each target class $k\neq y$, we solve a mixed-integer linear program that minimizes the $\ell_\infty$ perturbation such that $f_k(x)\geq f_y(x)$. 
Let $t_k^\star$ be the optimal perturbation for class $k$, and define $r^\star=\min_{k\neq y}t_k^\star$. Then any perturbation radius $\epsilon<r^\star$ gives a robust instance, while choosing $\epsilon$ close to $r^\star$ produces an instance with little robustness slack. The full MILP formulation is provided in Appendix~\ref{app:milp}. 

\textbf{Constant-on-Box Embedding.} 
Given an intermediate latent dimension $d_z$ and two neural networks $\Pi: \mathbb{R}^d \mapsto \mathbb{R}^{d_z}$, $\psi : \mathbb{R}^{d_z} \mapsto \mathbb{R}^c$, we consider the composition 
$f = \psi(\Pi(x))$. 
If $\Pi(x)=t \; \forall x \in \mathcal{B}_\epsilon(x_0)$ for some constant vector $t\in\mathbb{R}^{d_z}$, then any arbitrarily large, complex downstream $\psi$ is also constant on the perturbation set. Furthermore, if $(\psi(t))_y - (\psi(t))_k\geq\gamma>0$ for all $k\neq y$, then $f$ is robust with margin at least $\gamma$. 
For $\ell_\infty$ perturbations, the set $\mathcal{B}_\epsilon(x_0)$ is an axis-aligned box, so the condition $x\in\mathcal{B}_\epsilon(x_0)$ decomposes into independent interval constraints on each input coordinate. This allows $\Pi$ to be built from one-dimensional ReLU hinge functions applied separately to each coordinate.
To see this in one dimension, choose two scalars $\alpha$ and $\beta$ such that $[x_0-\epsilon,x_0+\epsilon] \subseteq[\alpha,\beta]$. Then the scalar map $x \mapsto t+\sigma(\alpha-x)+\sigma(x-\beta)$ is equal to $t$ throughout the perturbation interval. Applying this coordinatewise gives a ReLU network that collapses the entire input box to a single point in the latent dimension. 
The primary parameter used to stress-test verifiers is therefore the depth and width of the downstream network $\psi$, which can be made arbitrarily large while preserving the same ground-truth robustness label. 

\textbf{Mutually Exclusive Activation Patterns (MEAP).} 
Most relaxation-based verifier implementations initially treat network neurons independently before performing additional computationally intensive refinement. 
Therefore, to stress-test verifiers, MEAP constructs a ReLU network whose hidden units are individually unstable (i.e., switch between active and inactive as $x$ varies within $\mathcal{B}_\epsilon(x_0)$) but are coupled by 
geometric constraints. Specifically, neurons are organized into pairs such that, for every input in the perturbation set, each pair contains at least one active neuron. 
To embed this geometric constraint, the network first forms $P$ pairs of affine preactivations. 
For each $p\in[P]$, let $z_{p,1}(x)=w_p^\top x+b_{p,1}$ and $z_{p,2}(x)=-w_p^\top x+b_{p,2}$. 
Each pair is then aggregated by $r_p(x)=\max\{\sigma(z_{p,1}(x)),\sigma(z_{p,2}(x))\}$, and the target logit is defined as the minimum over pair scores: 
\begin{equation}
    f_y(x)
    =
    \min_{p\in[P]} r_p(x),
    \qquad
    f_k(x)=0\quad \text{for all } k\neq y.
    \label{eq:meap_network}
\end{equation}
This function is piecewise linear and can be implemented as a ReLU network using standard identities for pairwise maxima and minima. 
The robustness follows from choosing biases $b_{p,1}$ and $b_{p,2}$ such that, for every $x\in\mathcal{B}_\epsilon(x_0)$, at least one preactivation in each pair is positive. 
Equivalently, each pair satisfies $r_p(x)>0$ throughout the perturbation set. 
Hence $f_y(x)>0=f_k(x)$ for all $k\neq y$, certifying robustness. Appendix~\ref{app:meap} gives the full construction and proof. 

\textbf{Input-Corner Stress.}
The input-corner constructor creates ReLU networks whose robustness certificate is given by finite corner evaluation. 
Let $\mathcal B_\epsilon(x_0)$ be an $\ell_\infty$ box, suppose the 
logits depend only on an active set 
$A\subseteq [d]$ 
of $d_{act}$ input coordinates. 
We set the correct-class logit to be constant, $f_y(x)=0$, and define each competitor logit as $f_k(x)=h_k(x)-\beta_k$, where $h_k$ is convex on the active-coordinate box. 
Then the per-class margin $\mu_k(x)=f_y(x)-f_k(x)=\beta_k-h_k(x)$ is concave, so its minimum over the box is attained at an extreme point. Let 
$\mathcal V_\epsilon(x_0;A)=\{x_0+\epsilon s:s_i\in\{-1,+1\}\ \text{for } i\in A, s_i=0\ \text{for } i\notin A\}$ denote the active-coordinate corners. 
Choosing $\beta_k=\max_{v\in\mathcal V_\epsilon(x_0;A)} h_k(v)+\gamma \label{eq:corner_beta}$ 
implies $\mu_k(x)\geq\gamma$ for all $x\in\mathcal B_\epsilon(x_0)$, so the instance is robust with margin at least $\gamma$. 
A ReLU implementation takes $h_k(x)=\sum_{j=1}^J c_{k,j}\sigma(a_j^\top x+b_j)$ with $c_{k,j}\geq 0$, since nonnegative sums of ReLU hinges are convex. 
The construction can nevertheless be difficult for standard verifiers as many hinges can be placed near instability at $x_0$, making local relaxations loose, while the true proof relies on the 
fact that convex functions attain their maximum over a box at a corner. 
The main difficulty parameters are the number of hinges $J$, active dimension $d_{act}$, hinge scale $\|a_j\|_1$, and margin 
$\gamma$. 
Appendix~\ref{app:corner} gives the formal corner certificate and implementation details. 

\subsection{CNN Constructors} 

For CNN constructors, inputs are tensors $x\in\mathbb R^{C_{\mathrm{in}}\times H_{\mathrm{in}}\times W_{\mathrm{in}}}$. 
We write $Z$ for an intermediate feature map and use $\star$ to denote convolution with 
padding and stride fixed by the architecture. 
A convolutional layer with filters $\{\mathcal{W}_i\}_{i=1}^K$ and biases $\{b_i\}_{i=1}^K$ produces preactivations $u_{hw}^{(i)}=(\mathcal{W}_i\star Z)_{hw}+b_i$ at output channel $i$ and spatial location $(h,w)$, followed by a 
coordinatewise activation such as ReLU. 

\textbf{Deep Contractive CNN.}
This constructor attains robustness by forcing perturbations to contract through depth. 
We write the feature extractor as $\Phi = C_D\circ C_{D-1}\circ\cdots\circ C_1\circ P$, where $P$ is an optional front-end map and $C_1,\dots,C_D$ are contractive convolutional blocks. 
Let $L_{\mathrm{front}}$ denote the 
$\ell_\infty$ Lipschitz constant of $P$, with $L_{\mathrm{front}}=1$ if no front-end map is used. 
Each block $C_\ell$ consists of a convolutional layer followed by a $1$-Lipschitz pointwise activation function and is normalized to have induced $\ell_\infty$ Lipschitz constant of at most $\lambda<1$. Intuitively, then, each layer in $P$ may expand perturbations by at most $L_{\mathrm{front}}$ while each subsequent block shrinks the worst-case perturbations by a factor of at most $\lambda$. 
The formal Lipschitz certificate in Appendix~\ref{app:contractive_cnn} shows that: 
\begin{equation}
    \|\Phi(x)-\Phi(x')\|_\infty
    \leq
    L_{\mathrm{front}}\lambda^D\|x-x'\|_\infty . 
\end{equation}
We set the correct-class logit to be a large constant plus a centered perturbation term $ f_y(x)=\Gamma+w_y^\top(\Phi(x)-\Phi(x_0))$ and set each incorrect class 
logit equal to $0$. Therefore, over the perturbation set $\mathcal B_\epsilon(x_0)$, $f_y$ can decrease by at most $\|w_y\|_1 L_{\mathrm{front}}\lambda^D\epsilon$. Choosing $\Gamma>\|w_y\|_1 L_{\mathrm{front}}\lambda^D\epsilon$ therefore certifies robustness on $\mathcal B_\epsilon(x_0)$. The main difficulty parameters are the depth $D$, contraction rate $\lambda$, spatial dimension, front-end Lipschitz constant $L_{\mathrm{front}}$, and margin slack $\Gamma-\|w_y\|_1 L_{\mathrm{front}}\lambda^D\epsilon$.

\textbf{Paired-Bias CNN.} 
The paired-bias CNN construction is a convolutional analogue of MEAP since it involves creating pairs of convolutional channels whose ReLU activations may be unstable, but whose difference is globally nonnegative. The construction may be applied after arbitrary upstream convolutional layers since the certificate holds for every possible value of the shared convolutional response. Let $Z=\Psi(x)$ denote the feature map produced by this upstream network. For each filter pair $i\in[P]$, two channels share the same convolutional filter $\mathcal{W}_i$ but have ordered biases $b_i>c_i$. 
At the layer where the construction is applied, let $s^{(i)}=\mathcal W_i\star Z\in\mathbb R^{H_{\mathrm{sp}}\times W_{\mathrm{sp}}}$ denote the shared convolutional response for pair $i$, where $H_{\mathrm{sp}}$ and $W_{\mathrm{sp}}$ are the height and width of the feature map. Therefore, at spatial location $(h,w)$, the paired ReLU inputs are $s_{hw}^{(i)}+b_i$ and $s_{hw}^{(i)}+c_i$. 
Since the ReLU function is monotone and $b_i>c_i$, we have that $\sigma(s+b_i)-\sigma(s+c_i)\geq 0 \; \forall s\in\mathbb R$.
The target logit averages these nonnegative differences over all pairs and spatial locations: 
\begin{equation}
    f_y(x)=\Gamma+\frac{1}{PH_{\mathrm{sp}}W_{\mathrm{sp}}}\sum_{i=1}^P\sum_{h,w}
    \left[ \sigma(s_{hw}^{(i)}+b_i)-\sigma(s_{hw}^{(i)}+c_i) \right],
    \qquad
    f_k(x)=0\quad \text{for all }k\neq y.
    \label{eq:paired_bias_logit}
\end{equation}
Thus $f_y(x)\geq \Gamma>0$ for all inputs, giving a global robustness certificate. The construction becomes difficult when both channels in many pairs are unstable over the perturbation set. A relaxation that treats $\sigma(s+b_i)$ and $\sigma(s+c_i)$ independently may produce a negative lower bound on their difference, even though the true difference is always nonnegative because the two ReLUs share the same scalar $s$. The number of pairs $P$, size of feature map $H_{\mathrm{sp}}\times W_{\mathrm{sp}}$, bias gap $b_i-c_i$, and margin constant $\Gamma$ control the accumulated relaxation error. Appendix~\ref{app:paired_bias_cnn} gives the formal robustness proof. 

\subsection{Attention Module Constructors} 

For attention module-based constructors, inputs are sequences $X\in\mathbb R^{n\times d_{\mathrm{tok}}}$, 
with rows $x^{(i)}\in\mathbb R^{d_{\mathrm{tok}}}$ 
as tokens. 
Perturbations are measured in the entrywise norm $\|X-X_0\|_\infty=\max_{i,j}|X_{ij}-(X_0)_{ij}|$. We write $Q(X)=XW_Q$, $K(X)=XW_K$, and $V(X)=XW_V$ for the query, key, and value projections, where $W_Q\in\mathbb R^{d_{\mathrm{tok}}\times d_q}$, $W_K\in\mathbb R^{d_{\mathrm{tok}}\times d_k}$, and $W_V\in\mathbb R^{d_{\mathrm{tok}}\times d_v}$ are learned matrices.

\textbf{Fixed-Ordering Softmax Attention.} 
For single-head softmax attention, robustness can be certified by showing that the score ordering is stable and the resulting attention output cannot drift enough to change the class. 
Define the score kernel $M=
{W_QW_K^\top}/{\sqrt{d_k}}$, and let $S(X)=XMX^\top$ with $S_{ij}(X)=(x^{(i)})^\top Mx^{(j)}$. 
The attention output is $\mathrm{Attn}(X)=A(X)XW_V$, where $A(X)$ is the row-wise softmax of $S(X)$, and the full classifier is $f=h\circ \mathrm{Attn}$ for some downstream head $h$. Let $\pi_i^\star=\argmax_j S_{ij}(X_0)$ denote the nominal row-wise score maximizer in row $i$, assuming it is unique, and define the score gaps $\Delta_{ij}=S_{i,\pi_i^\star}(X_0)-S_{ij}(X_0)>0$ for $j\neq \pi_i^\star$. A bilinear expansion of the score differences, derived in
Appendix~\ref{app:fixed_attention}, gives:
\begin{equation}
    \left|
    \left[S_{i,\pi_i^\star}(X)-S_{ij}(X)\right]-\Delta_{ij}
    \right|
    \leq
    \epsilon C_{ij},
    \label{eq:attention_gap_bound}
\end{equation}
for all $\|X-X_0\|_\infty\leq\epsilon$, where $C_{ij}$ depends on $\|M\|_{\mathrm{op}}$, the nominal token norms, $d_\mathrm{tok}$, and $\epsilon$. 
If $\Delta_{ij}>\epsilon C_{ij}$ for every $i$ and $j\neq\pi_i^\star$, then row-wise score maximizers remain fixed throughout $\mathcal{B}_\epsilon(x_0)$.

Appendix~\ref{app:fixed_attention} also provides a perturbation bound for the attention output: there exists a constant $L_{\mathrm{attn}}$ such that  $\|\mathrm{Attn}(X)-\mathrm{Attn}(X_0)\|_F\leq \sqrt n L_{\mathrm{attn}}\epsilon \; \forall X\in\mathcal B_\epsilon(X_0)$. 
Letting $\mu(X_0)=\min_{k\neq y}\left(f_y(X_0)-f_k(X_0)\right)$ denote the nominal classification margin,  if each logit of the downstream head $h$ is $L_h$-Lipschitz with respect to the Frobenius norm, 
then each class-wise margin 
can decrease at most by $2L_h\sqrt n L_{\mathrm{attn}}\epsilon$. 
Hence, if $\mu(X_0)>2L_h\sqrt n L_{\mathrm{attn}}\epsilon$,
then $f$ is robust on $\mathcal B_\epsilon(X_0)$. 

\textbf{Dominant-Key Linear Attention.}
This constructor uses linear attention, where the softmax score kernel is replaced by a nonnegative feature-map inner product. Let $q^{(i)}(X)=Q(X)[i,:]$ and $k^{(j)}(X)=K(X)[j,:]$. For a nonnegative feature map $\phi$ with positive row sums, define:
\begin{equation}
    w_{ij}(X)=
    \langle \phi(q^{(i)}(X)),\phi(k^{(j)}(X))\rangle
    \geq 0.
\end{equation}
The normalized attention weights are then $
    \alpha_{ij}(X)
    =
    \frac{w_{ij}(X)}{\sum_{\ell=1}^n w_{i\ell}(X)},
$ and the 
module's output is:
\begin{equation}
    \mathrm{Attn}_{\mathrm{lin}}(X)[i,:]
    =
    \sum_{j=1}^n \alpha_{ij}(X)V(X)[j,:].
    \label{eq:linear_attention_map}
\end{equation}
Instead of certifying softmax score-order stability, this constructor enforces a dominant key 
per query row. Fix a dominant key $j_i^\star$ for each row and suppose that throughout the perturbation set we have: 
\begin{equation}
    w_{i j_i^\star}(X)
    \geq
    \rho_i \sum_{j\neq j_i^\star} w_{ij}(X).
    \label{eq:linear_attention_dominance}
\end{equation}
This condition says that, for query row $i$, the unnormalized attention weight assigned to key $j_i^\star$ is at least $\rho_i$ times the total weight assigned to all other keys. 
Letting $\rho=\min_i\rho_i$, we therefore have that row $i$ assigns at least $\frac{\rho_i}{(1+\rho_i)}$ normalized attention mass to $j_i^\star$. 
Since the attention output is a weighted average of value vectors, the row output remains close to the value vector $V(X)[j_i^\star,:]$ associated with the dominant key. 
The perturbation proof in Appendix~\ref{app:linear_dominance_attention} shows: 
\begin{equation}
    \|\mathrm{Attn}_{\mathrm{lin}}(X)-\mathrm{Attn}_{\mathrm{lin}}(X_0)\|_{2,\infty}
    \leq
    \frac{2}{1+\rho}D_V(X_0)
    +
    \Big(1+\frac{2}{1+\rho}\Big)
    \epsilon\sqrt{d_{\mathrm{tok}}}\|W_V\|_{\mathrm{op}} ,
    \label{eq:linear_attention_bound}
\end{equation} 
where $D_V(X_0)=\max_i\max_{j\neq j_i^\star}\|V(X_0)[j,:]-V(X_0)[j_i^\star,:]\|_2$ is the largest nominal distance, over query rows, between the value vector associated with the dominant key $j_i^\star$ and any 
other. Thus, if $f=h\circ\mathrm{Attn}_{\mathrm{lin}}$, each logit of $h$ is $L_h$-Lipschitz 
in Frobenius norm, and the nominal margin $\mu(X_0)$ is larger than twice the induced logit drift, then the classifier is robust. For this constructor, difficulty can be controlled through the dominance ratio $\rho$, value spread $D_V(X_0)$, and sequence length.

\subsection{Polynomial Network Constructor} 

The polynomial network constructor generates candidate robust instances by explicitly controlling the geometry of the decision boundary. We consider polynomial classifiers, such as feedforward networks with affine layers and monomial activations, so that each logit is a polynomial in the input. In the binary classification case, let $\mu(x)=f_1(x)-f_2(x)$ be the margin polynomial and let $D=\{x:\mu(x)=0\}$ be the decision boundary. Thus $D$ is a real algebraic hypersurface. The constructor samples a smooth boundary point $p\in D$, chooses a norm-adapted normal direction $u(p)$, 
and places the nominal input at $x_0=p+(\epsilon+\delta)u(p)$, where $\delta>0$ is a boundary buffer. The sampled point $p$ is therefore a known boundary point just outside the perturbation set $B_\epsilon(x_0)$. 

The local placement step creates a near-boundary instance, but the ground-truth robustness label comes from a global separation check 
certifying that $B_\epsilon(x_0)\cap D=\emptyset$. 

This rules out 
another branch or component of the 
decision boundary 
entering the perturbation set. 
When the check succeeds, $\mu$ has constant sign on $B_\epsilon(x_0)$, so the instance is robust. 
The buffer $\delta$ controls proximity to failure: 
decreasing $\delta$ makes the perturbation set nearly touch 
the decision boundary, forcing verifiers to certify a small positive separation from a curved polynomial hypersurface. 
The main difficulty parameters are the input dimension, polynomial degree, hidden width, and boundary buffer $\delta$. Appendix~\ref{app:pnn} gives the full construction, boundary-sampling procedure, and nearest-boundary certificate. 

\section{The Verification Instance Difficulty Profile}\label{sect:profiles}

In addition to the VeriStress-GT framework, we move towards a quantifiable characterization of verification instance difficulty via Difficulty Profiles. 
Such quantification allows one to compare instances beyond binary robust and non-robust labels. The components of the Difficulty Profile are chosen to satisfy design principles ensuring computability, applicability, and  interpretability (see Appendix~\ref{app:comp_detail_results} for an explanation of design principles and candidate components tested). 
Based on the taxonomy of verifier certificate styles (Appendix~\ref{app:taxonomy}) and design principles, we define the profile: 

\begin{definition}
\label{def:profile}
The \emph{difficulty profile} of a verification instance $(f, x_0, y, \mathcal{B}_\epsilon)$ is the tuple:
\begin{equation}\label{eq:profile}
    \mathbf{D}(f, x_0, y, \mathcal{B}_\epsilon) = \bigl(\widehat M_{\min},\; G_{\mathrm{IBP}},\; U, \;A_\tau,\; d_{\mathrm{eff}}\bigr) .  
\end{equation}
Each component, defined below, targets a different axis of verification difficulty: $\widehat M_{\min}$ measures the amount of empirical margin slack 
to certify, 
$G_{\mathrm{IBP}}$ measures how much of this slack is lost under a coarse interval relaxation, 
$U$ measures the fraction of nonlinear units whose phase remains ambiguous over the input set, 
$A_\tau$ measures the local regions encountered in the input set, 
and $d_{\mathrm{eff}}$ measures the effective dimensionality of the input directions along which the margin is sensitive. 
\end{definition}

\paragraph{Component 1: Minimum Margin ($\widehat{M}_{\min}$).} 
Proximity to violation is a basic, intuitive source of verification difficulty as instances with small positive margin may require substantial branching or bound refinement. Since $\min_{x\in\mathcal{B}_\epsilon(x_0)}\mu(x)$ is generally nonconvex and intractable, we estimate it via a sampling procedure. Let $\mathcal{S} = \{x_1, \dots, x_N \} \subset \mathcal{B}_\epsilon(x_0)$ denote a set of profiling samples, drawn from a mixture of uniform and boundary-biased samples. Then, we define:
\begin{equation}
    \widehat{M}_{\min} := \min_{x_i \in \mathcal {S}} \mu(x_i) \label{mhat}
\end{equation}
A small, positive $\widehat{M}_{\min}$ suggests that the instance is close to a counterexample, while $\widehat{M}_{\min} < 0$ indicates that 
sampling already found one. 
We note that $\widehat{M}_{\min}$ is not designed to be a certificate of robustness, 
but rather a verifier-agnostic proxy for 
proximity to the decision boundary. 
 
\paragraph{Component 2: IBP Relative Gap ($G_{\mathrm{IBP}})$.} 
Relaxation-based verifiers certify robustness 
via tractable lower bounds on $\mu(x)$. 
We therefore use IBP as an efficient, architecture-agnostic proxy for relaxation looseness. 
Let $L_{\mathrm{IBP}}$ denote the IBP lower bound 
over $\mathcal{B}_\epsilon(x_0)$. 
Define the 
relative gap as: 
\begin{equation}
    G_{\mathrm{IBP}} := \frac{\widehat{M}_{\min} - L_{\mathrm{IBP}}}{|\widehat{M}_{\min}| + \eta} , 
    \label{IBPgap}
\end{equation}
where $\eta >0$ is a small numerical stability constant. 
The numerator estimates the 
margin slack lost by IBP by the best sampled margin, while the denominator normalizes 
by the empirical margin scale. 
Hence larger $G_{\mathrm{IBP}}$ 
indicates increased instance difficulty, particularly for relaxation-based verifiers.

\paragraph{Component 3: Unstable Fraction ($U$).}
The unstable fraction measures how many nonlinear neurons cannot be treated as fixed or locally affine over $\mathcal B_\epsilon(x_0)$. 
For ReLU networks, this 
is the fraction of neurons whose IBP pre-activation interval $[\ell_j,u_j]$ crosses zero, i.e., $\ell_j<0<u_j$. 
For smooth nonlinearities, we use the same principle by marking coordinate $j$ as effectively nonlinear when the activation slope varies sufficiently over its IBP interval: 
$\omega_j :=\sup_{s,t\in[\ell_j,u_j]} |\phi_j'(s)-\phi_j'(t)| > \tau$ for a fixed threshold $\tau>0$. Let $\mathcal J$ denote the set of nonlinear coordinates. We define: 
\begin{equation}
    U :=
    \frac{1}{|\mathcal J|}
    \sum_{j\in\mathcal J}
    \mathbbm{1}_{\{\omega_j>\tau\}}.
\end{equation}
For ReLU neurons, this reduces exactly to the unstable-neuron fraction. Larger $U$ indicates that more units require nontrivial relaxation or case splitting.

\paragraph{Component 4: Local-Region Complexity ($A_\tau$).}
The unstable fraction $U$ measures individually nonlinear coordinates, but 
not how many distinct local behaviors are 
realized 
in $\mathcal B_\epsilon(x_0)$. To capture this, we view local-region complexity as an affine-covering problem. For tolerance $\delta>0$, define:
\begin{equation}\label{eq:Naff}
    N_{\mathrm{aff}}(\delta)
    :=
    \min\Bigl\{
    m:
    \mathcal B_\epsilon(x_0)\subseteq \textstyle\bigcup_{i=1}^m U_i,\;
    \exists (a_i,b_i)
    \text{ s.t. }
    \sup\limits_{x\in U_i}
    |\mu(x)-a_i^\top x-b_i|
    \leq \delta
    \Bigr\},
\end{equation}
where the $U_i$ range over convex subsets of $\mathcal B_\epsilon(x_0)$. Let $L_c:=\sup\limits_{x\in\mathcal B_\epsilon(x_0)}\sup\limits_{\xi\in\partial_c\mu(x)}\|\xi\|_{p,*}$ denote a local Lipschitz constant for the margin over $\mathcal B_\epsilon(x_0)$, where $\partial_c\mu(x)$ is the Clarke subdifferential and $\|\cdot\|_{p,*}$ is dual to $\|\cdot\|_p$. Set $A_\tau^\star:= \log N_{\mathrm{aff}}(\tau\epsilon L_c)$. Therefore, $A_\tau^\star$ counts how many affine functions are needed to approximate the  margin over the perturbation set with resolution $\tau$. For ReLU networks, this corresponds to an effective count of affine regions, while for networks with smooth activation functions it counts the number of local Taylor approximations needed to approximate the curved margin landscape. The following proposition motivates our empirical estimator by showing that affine-cover complexity is governed by how much the normalized gradient can vary over $\mathcal{B}_\epsilon(x_0)$:

\begin{proposition}[Smooth affine-cover bound]\label{prop:beta_bound}
Assume that $\mu(x)$ is differentiable on $\mathcal B_\epsilon(x_0)$ and that its gradient is $\beta$-Lipschitz.
Let $\widetilde\beta:=\frac{\beta\epsilon}{L_c(\mu)}$ be the normalized gradient-variation scale. Then, up to the standard covering-number constant for the chosen norm,
$
   \smash{ A_\tau^\star\leq d\log
    \big( 1+\sqrt{\smash[b]{ {2\widetilde\beta}/{\tau}} } \big)
 }  .
$    
\end{proposition}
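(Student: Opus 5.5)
The plan is to build an explicit affine cover from a covering of the ball $\mathcal B_\epsilon(x_0)$ by small balls of the same norm, and to use the first-order Taylor expansion on each piece. Fix a radius $r>0$, to be chosen later. Let $x_1,\dots,x_m$ be an $r$-net of $\mathcal B_\epsilon(x_0)$ in $\|\cdot\|_p$, and set $U_i:=\mathcal B_r(x_i)\cap\mathcal B_\epsilon(x_0)$. Each $U_i$ is an intersection of two convex sets, so it is convex, and the $U_i$ cover $\mathcal B_\epsilon(x_0)$. On $U_i$ I take the tangent plane $a_i:=\nabla\mu(x_i)$, $b_i:=\mu(x_i)-\nabla\mu(x_i)^\top x_i$.

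The key estimate is the descent-lemma bound. For $x\in U_i$, the fundamental theorem of calculus along the segment $[x_i,x]$ gives
\[
\Bigl|\mu(x)-\mu(x_i)-\nabla\mu(x_i)^\top(x-x_i)\Bigr|\le\int_0^1\bigl\|\nabla\mu(x_i+s(x-x_i))-\nabla\mu(x_i)\bigr\|_{p,*}\,\|x-x_i\|_p\,ds\le\tfrac{\beta}{2}\|x-x_i\|_p^2\le\tfrac{\beta}{2}r^2 .
\]
Here the segment stays in the convex ball, and I read the Lipschitz condition on the gradient as being from $\|\cdot\|_p$ to the dual norm $\|\cdot\|_{p,*}$. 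To make $U_i$ admissible in \eqref{eq:Naff} with tolerance $\delta=\tau\epsilon L_c$, it suffices that $\beta r^2/2\le\tau\epsilon L_c$. I therefore choose $r=\sqrt{2\tau\epsilon L_c/\beta}$. Normalizing, $r/\epsilon=\sqrt{2\tau L_c/(\beta\epsilon)}=\sqrt{2\tau/\widetilde\beta}$, so $\epsilon/r=\sqrt{\widetilde\beta/(2\tau)}$.

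The last step is the standard volumetric covering bound. A norm ball of radius $\epsilon$ in $\mathbb R^d$ admits an $r$-net of size at most $(1+2\epsilon/r)^d$, obtained by comparing the volumes of disjoint $r/2$-balls placed at the points of a maximal $r$-packing. This gives
\[
N_{\mathrm{aff}}(\tau\epsilon L_c)\le\Bigl(1+2\sqrt{\widetilde\beta/(2\tau)}\Bigr)^d=\Bigl(1+\sqrt{2\widetilde\beta/\tau}\Bigr)^d ,
\]
and taking logarithms yields $A_\tau^\star\le d\log\bigl(1+\sqrt{2\widetilde\beta/\tau}\bigr)$, which is the claimed bound. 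The phrase ``up to the standard covering-number constant'' absorbs any other convention for the net constant, for example $3\epsilon/r$ or nets whose centers are required to lie in the ball.

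The main obstacle is this bookkeeping of constants and norms rather than any conceptual step. I need to check that the $r$-net can be taken with centers inside $\mathcal B_\epsilon(x_0)$, so that the Taylor expansion at $x_i$ uses the gradient on the ball. I also need to check that the Lipschitz assumption on $\nabla\mu$ pairs $\|\cdot\|_p$ with $\|\cdot\|_{p,*}$, so that the quadratic remainder bound holds in the chosen norm. If a different norm pairing is intended, the argument is unchanged except for a norm-equivalence factor, which is again absorbed into the stated constant. Degenerate cases, such as $L_c=0$ (where $\mu$ is constant and a single piece suffices) or $\beta=0$ (where $\mu$ is affine), are handled directly.
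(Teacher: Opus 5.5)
Your proposal is correct and follows the paper's proof essentially step for step. Both use the same net of $\ell_p$ balls with centers in $\mathcal B_\epsilon(x_0)$, intersected with the ball to give convex cells, the same tangent-plane approximation with the $\frac{\beta}{2}r^2$ remainder, the same choice $r=\sqrt{2\tau\epsilon L_c/\beta}$, the same $(1+2\epsilon/r)^d$ covering bound, and the same handling of the degenerate cases $L_c=0$ and $\beta=0$.
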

See Appendix~\ref{app:Proofs} for the proof. Directly estimating the global gradient-Lipschitz scale $\widetilde\beta$ is conservative in practice, so we instead measure the realized variation in local linear behavior by sampling normalized gradients. For each $x_i\in\mathcal S$, let
$q_\tau(x_i):=\operatorname{Quantize}_\tau(\nabla\mu(x_i)/L_c)$, where $\operatorname{Quantize}_\tau(v)$ denotes coordinatewise quantization of $v$ onto a grid of width $\tau$. We define:
\begin{equation}\label{eq:Atau_empirical}
    A_\tau := \log \left| \{q_\tau(x_i):x_i\in\mathcal S\}\right|.
\end{equation}
$A_\tau$ is a sampled log-count of distinct normalized local affine behaviors, as it estimates the realized local complexity suggested by the affine-cover definition. Larger $A_\tau$ indicates more heterogeneous local behavior, making coarse relaxations, abstractions, or partitions less likely to remain tight.

\paragraph{Component 5: Effective Gradient dimensionality ($d_\mathrm{eff}$).} 
Even when the margin is sensitive to perturbations, the structure of this sensitivity matters. If sensitivity is concentrated in a few input coordinates, splitting or search may isolate it quickly, but if it is spread across many, 
the instance is effectively high-dimensional. Let $\nabla \mu(x)$ denote the input gradient of the margin where it exists, using any measurable subgradient at non-smooth points. Define the average effective gradient dimension:
\begin{equation}
\label{eq:deff}
    d_{\mathrm{eff}}:= \frac{1}{|\mathcal S|} \sum_{x_i\in\mathcal S}\frac{ \|\nabla \mu(x_i)\|_1^2
    }{\|\nabla \mu(x_i)\|_2^2 + \eta} , 
\end{equation}
where $\eta>0$ is a small constant for numerical stability. This quantity is related to the Participation Ratio (PR) for measuring dimensionality \cite{Gao214262}, and is near $1$ when the gradient is concentrated on one coordinate and near $r$ when it is spread evenly across $r$ coordinates. Larger $d_{\mathrm{eff}}$ therefore indicates more distributed margin sensitivity, which is especially relevant for $\ell_\infty$ perturbations since the first-order margin decrease scales with $\epsilon\|\nabla\mu(x)\|_1$.

\section{Numerical Experiments and Verification Study}
\label{sect:exp}

We evaluate $5$ top-scoring verifiers on $225$ VeriStress-GT instances spanning all constructors in Section~\ref{sec:robust_constructions}, plus two external robustness benchmarks: MNIST$\_$fc, with fully connected ReLU networks trained on MNIST~\cite{bak2021secondinternationalverificationneural,726791}, and oval$21$, 
with ReLU 
CNNs trained on MNIST and CIFAR-related 
tasks~\cite{bak2021secondinternationalverificationneural,Krizhevsky2009LearningML}. 
The 
VeriStress-GT instances form a parameter-diverse stress-test suite, though the framework can generate 
unlimited instances. We compute Difficulty Profile estimates for all instances.

All experiments were run on an Intel Xeon Gold $6152$ CPU server with $32$ CPU cores, using per-instance timeouts of $600$ seconds for VeriStress-GT and $360$ seconds for external benchmarks. Appendix~\ref{app:detail_results} gives framework instantiation details and aggregate verifier outcomes. Figure~\ref{fig:veriStress_results} summarizes verifier runtimes, while Figure~\ref{fig:profile_correlations} shows how Difficulty Profile components and interactions discriminate verifier timeouts. We highlight three main takeaways:

\begin{figure}[t]
\centering 
    \includegraphics[width=.82\linewidth]{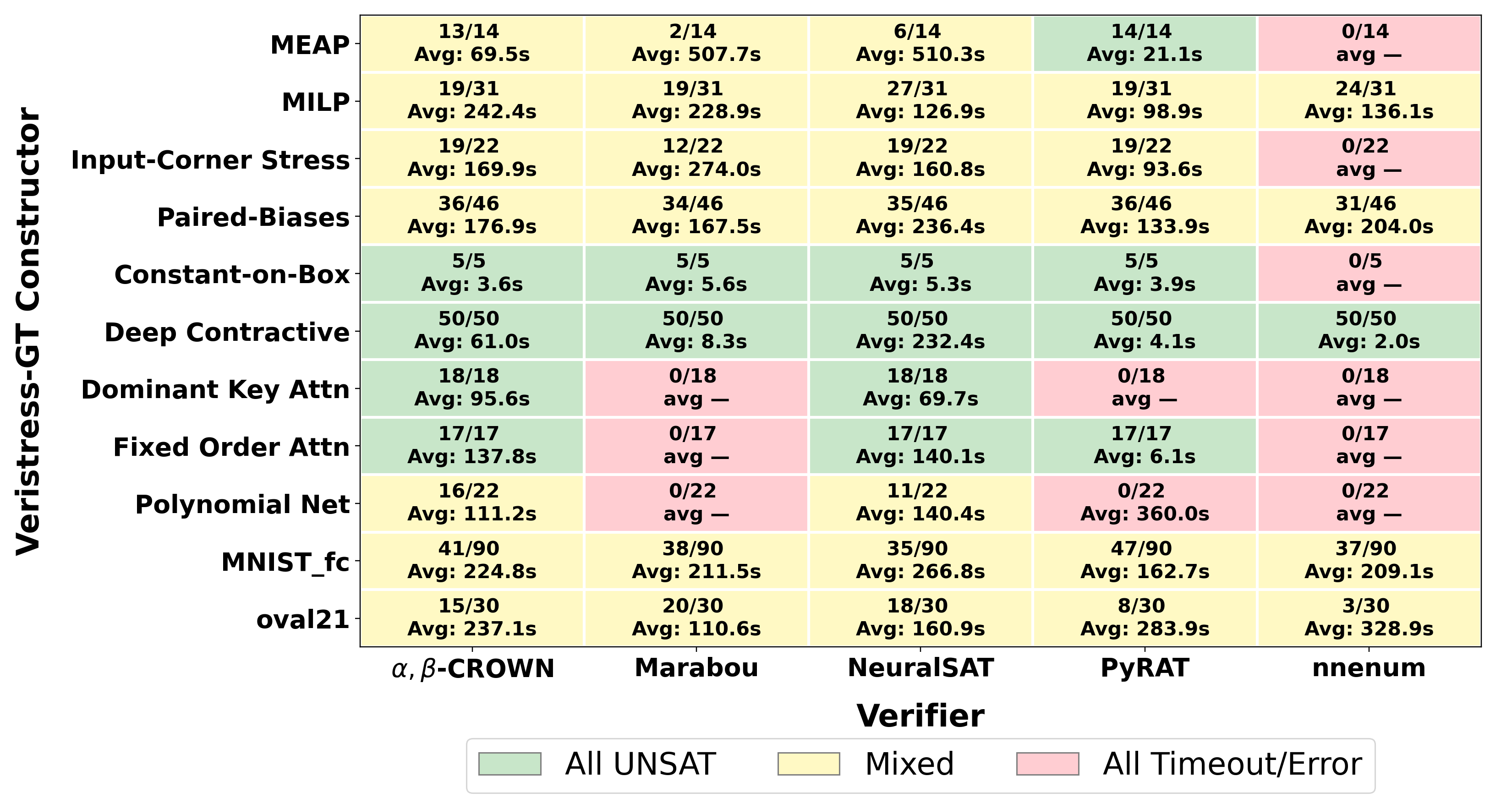}
    \caption{
    Runtime outcomes for $5$ verifiers on VeriStress-GT constructors and two external benchmarks, shown in the last two rows. Average runtimes include timeouts of $600$ seconds for VeriStress-GT and $360$ seconds for external benchmarks. Fractions report correct results over total instances; missing runtimes indicate unsupported computation graphs.
    }
    \label{fig:veriStress_results}
\end{figure}

\textbf{1. Ground-truth labels reveal incorrect robustness claims.} 
Despite all instances in this instantiation of the VeriStress-GT framework containing analytic proofs of UNSAT (i.e., robust) labels, multiple verifiers return false SAT (i.e., non-robust) claims. Many of these incorrect claims can primarily be attributed to numerical tolerance limitations. For instance, setting the perturbation radius $\epsilon$ equal to $0.999$ times the exact robustness radius computed via the MILP approach led to such SAT claims for a number of verifiers. However, when separately testing verifiers with an Exact-Radius MILP construction containing provably non-robust instances (i.e., a perturbation radius $\epsilon$ larger than the network's exact adversarial radius), we discovered a bug in a popular verifier's handling of disjunctive output constraints, leading to false UNSAT claims which have since been corrected after collaboration with developers (see Appendix~\ref{app:NeuralSAT_Bug} for details). Incorrect claims for  verification can be impactful; false UNSAT claims can mislead practitioners into believing a model is certifiably safe when it is not, and false SAT claims distort verifier evaluation. VeriStress-GT helps expose these failures and highlights the need for a careful review of numerical tolerance limits.

\textbf{2. Difficulty Profiles help make verifier timeouts diagnosable.} Without Difficulty Profiles, analysis generally ends at measuring the number of timeouts; some instances are certified, others are not, and the reason largely remains opaque. Difficulty Profiles turn these outcomes into diagnosable failure modes. For example, Figures~\ref{fig:mnist_traj} and~\ref{fig:oval_traj} in Appendix~\ref{app:detail_results} show that the final $16$ MNIST$\_$fc instances have $G_{\mathrm{IBP}}\geq 2\times 10^5$, several orders of magnitude larger than the remaining MNIST$\_$fc instances and the corresponding values in VeriStress-GT or oval$21$, suggesting extreme interval-relaxation looseness as the dominant obstruction. In oval$21$, instances $10$--$16$ are certified by at most two verifiers each and all have $d_{\mathrm{eff}}>1000$, more than twice any value observed in MNIST$\_$fc, suggesting high-dimensional margin sensitivity. These examples illustrate how Difficulty Profiles convert raw verifier outcomes into more actionable explanations, helping identify whether future progress should target tighter relaxations, better branching, dimension-aware heuristics, or other verifier improvements.

\textbf{3. Difficulty Profiles help turn VeriStress-GT into a targeted stress-test suite.} As VeriStress-GT instances scale, different constructors isolate different failure modes, making the benchmark useful not only for measuring performance but also for testing verifier improvements. For MEAP instances, increasing difficulty produces large relaxation gaps $G_{\mathrm{IBP}}$, high instability $U$, and large effective gradient dimensionality $d_{\mathrm{eff}}$. These instances therefore form a targeted stress test for methods that must reason about activation-pattern ambiguity, branching decisions, and loose intermediate bounds. In contrast, the Corners construction has $A_\tau = 0$ and $d_{\mathrm{eff}} \approx 0$, so it is structurally simple from an activation-pattern perspective. However, verifiers slow or timeout as $G_{\mathrm{IBP}}$ grows, showing that relaxation looseness and geometric bound growth can be isolated from nonlinear-region complexity.

These distinctions make the profiles actionable. A new branching heuristic can be evaluated on MEAP instances, a tighter relaxation can be tested on Corners instances, and depth-aware abstractions can be tested on Paired-Bias CNNs. Rather than treating all timeouts as equivalent failures, Difficulty Profiles reveal which instances are useful for stress-testing specific verifier capabilities.

\begin{figure}[t]
    \centering

    \begin{subfigure}[t]{0.49\linewidth}
        \centering
        \includegraphics[width=\linewidth]{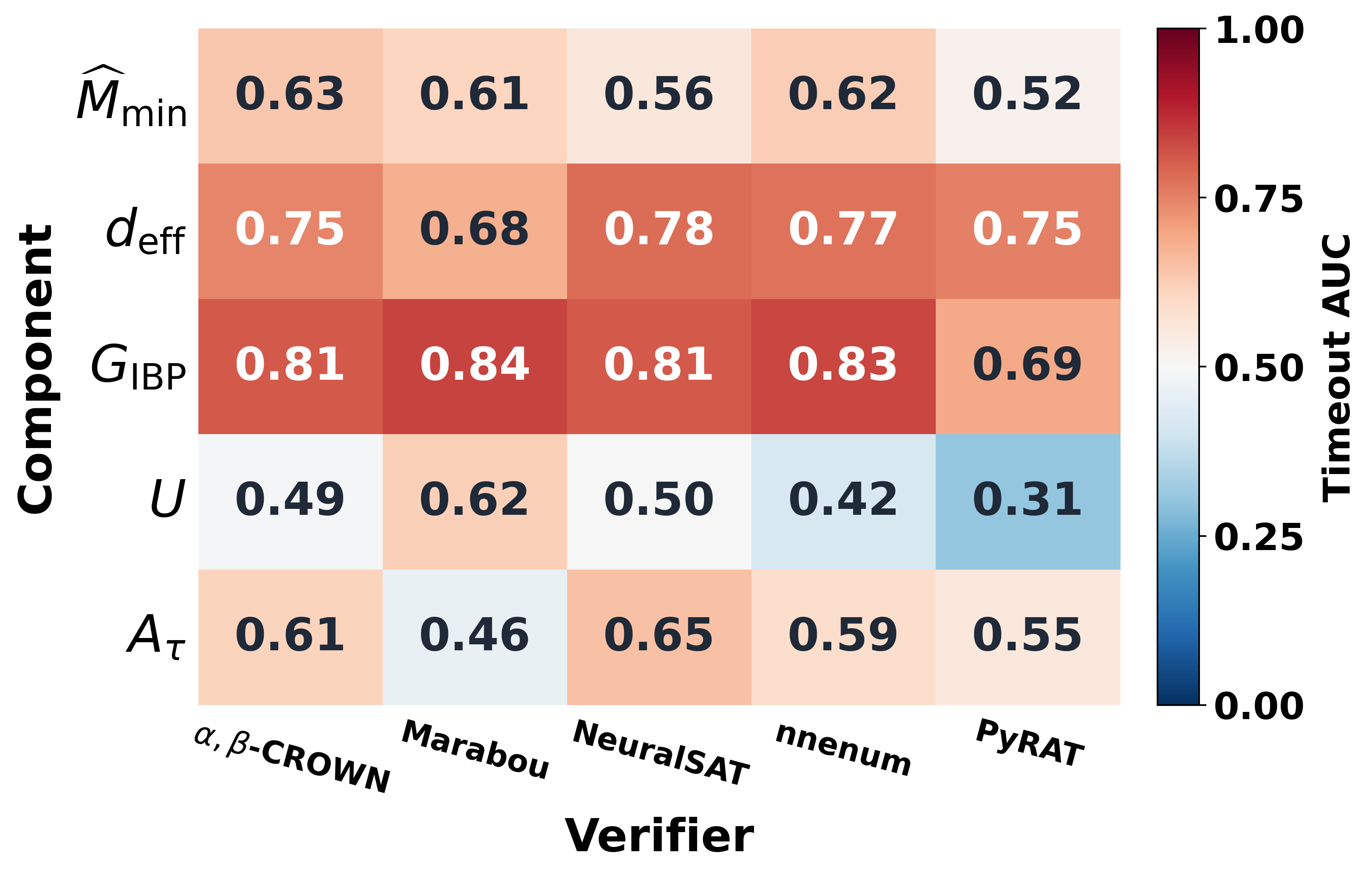}
        \caption{Profile correlations}
        \label{fig:profile_correlations_main}
    \end{subfigure}
    \hfill
    \begin{subfigure}[t]{0.49\linewidth}
        \centering
        \includegraphics[width=\linewidth]{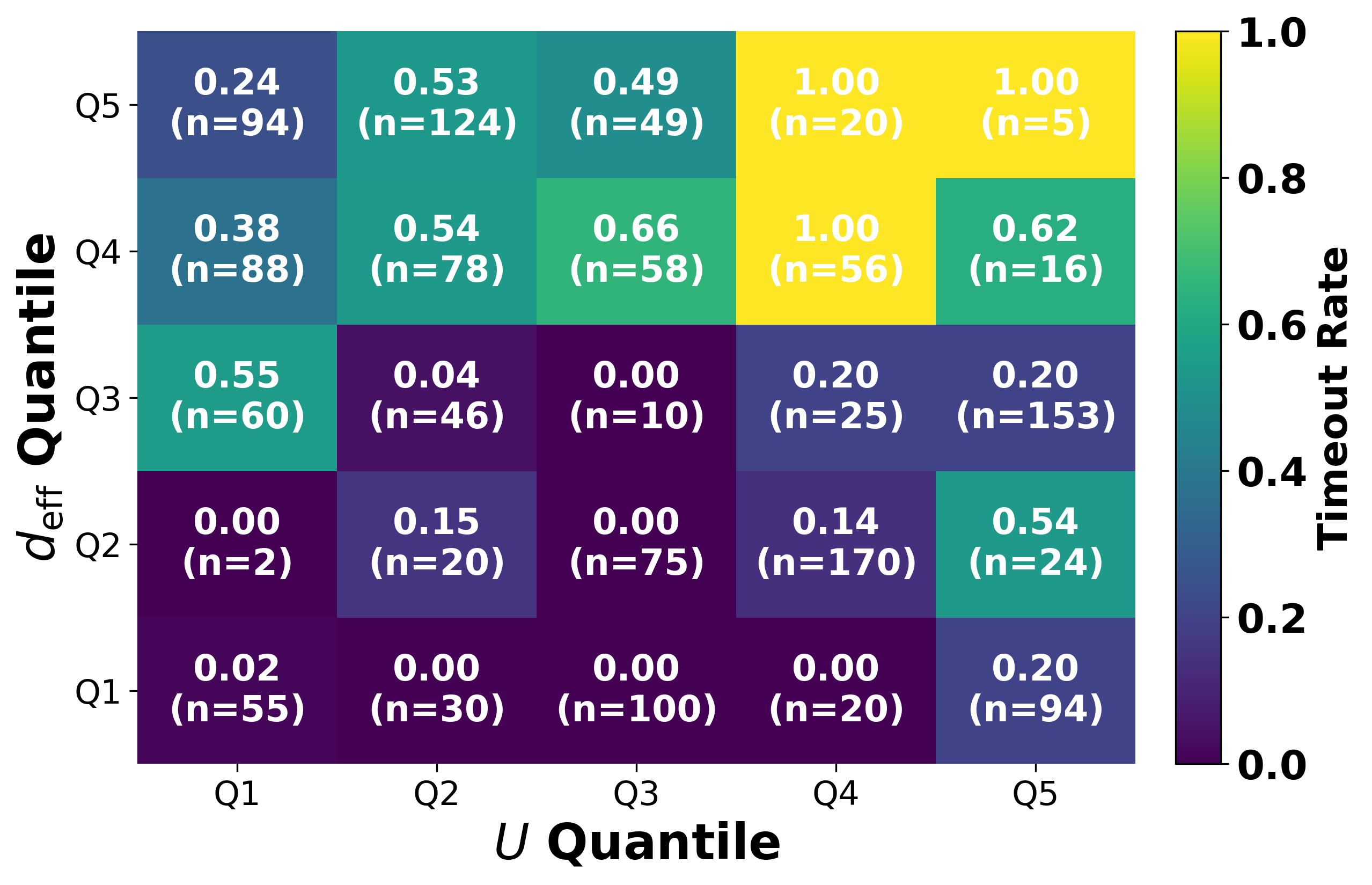}
        \caption{$U \times d_{\mathrm{eff}}$}
        \label{fig:timeout_heatmap_u_deff}
    \end{subfigure}
    \caption{(left) Timeout AUC for each profile component across all benchmarks, treating timed-out instances as positives and solved instances as negatives; values above $0.5$ indicate that larger component values are associated with timeouts.
(right) Binned timeout rate over unstable fraction $U$ and effective gradient dimension $d_{\mathrm{eff}}$, showing that timeouts can depend on interactions between profile components. For example, we see $U$ have little effect on timeout rate when $d_\mathrm{eff}$ is small.
}
\label{fig:profile_correlations}

\vspace{-1em}
\end{figure}

\section{Conclusion, Limitations, and Future Work}
We introduced VeriStress-GT, a modular framework for constructing neural network verification instances with verifier-independent ground-truth robustness labels. Together with the Difficulty Profile, VeriStress-GT enables targeted stress-testing and makes verifier timeouts more diagnosable. It also exposed numerical tolerance issues and implementation bugs, underscoring the need for ground-truth labels in reliable verifier evaluation. 

Several limitations and opportunities for future work remain. 
Due to computational costs, the verifiers were tested with a limited number of repeated seeds. Moreover, we do not claim that Difficulty Profiles are the sole source of predictive power for verifier runtime; instead, we aim for a collection of complementary, succinct, interpretable, and predictive components. 

Potential avenues for future work include extending the framework to additional architectures, such as residual networks and graph neural networks. 
Difficulty Profiles may also help guide and test verifiers for modern language-model components, where component-specific verifiers and compositional certificates could improve scalability. 

\subsection*{Acknowledgments} 
This project has been supported by 
DARPA Artificial Intelligence Quantified (AIQ) grant HR00112520014.  
GM was partially supported by 
NSF grants 
DMS-2522495, 
DMS-2145630, 
CCF-2212520, 
DFG SPP 2298 ``Theoretical Foundations of Deep Learning'' project 464109215,  
and BMFTR in DAAD project 57616814 (SECAI).

\newpage 

\bibliographystyle{plainnat}
\bibliography{references}

@report{stanford_ai_index_2025,
  title   = {{AI} Index Report 2025},
  author  = {{Stanford Institute for Human-Centered Artificial Intelligence}},
  institution = {Stanford University},
  year    = {2025},
  url = {https://hai.stanford.edu/assets/files/hai_ai_index_report_2025.pdf}, 
}

@article{ozcanli2020deep,
  title={Deep learning methods and applications for electrical power systems: A comprehensive review},
  author={Ozcanli, Asiye K and Yaprakdal, Fatma and Baysal, Mustafa},
  journal={International Journal of Energy Research},
  volume={44},
  number={9},
  pages={7136--7157},
  year={2020},
  publisher={Wiley Online Library}
}

@article{yu2023challenges,
  title={Challenges and opportunities of deep learning-based process fault detection and diagnosis: a review},
  author={Yu, Jianbo and Zhang, Yue},
  journal={Neural Computing and Applications},
  volume={35},
  number={1},
  pages={211--252},
  year={2023},
  publisher={Springer}
}

@misc{drenkow2022systematicreviewrobustnessdeep,
      title={A Systematic Review of Robustness in Deep Learning for Computer Vision: Mind the gap?}, 
      author={Nathan Drenkow and Numair Sani and Ilya Shpitser and Mathias Unberath},
      year={2022},
      eprint={2112.00639},
      archivePrefix={arXiv},
      primaryClass={cs.CV},
      url={https://arxiv.org/abs/2112.00639}, 
}

@article{Meng_2024,
   title={Adversarial Robustness of Deep Neural Networks: A Survey from a Formal Verification Perspective},
   noISSN={2160-9209},
   url={http://dx.doi.org/10.1109/TDSC.2022.3179131},
   noDOI={10.1109/tdsc.2022.3179131},
   journal={IEEE Transactions on Dependable and Secure Computing},
   publisher={Institute of Electrical and Electronics Engineers (IEEE)},
   author={Meng, Mark Huasong and Bai, Guangdong and Teo, Sin Gee and Hou, Zhe and Xiao, Yan and Lin, Yun and Dong, Jin Song},
   year={2024},
   pages={1–1} }

@article{LIU2023175,
title = {A comprehensive survey of robust deep learning in computer vision},
journal = {Journal of Automation and Intelligence},
volume = {2},
number = {4},
pages = {175-195},
year = {2023},
noissn = {2949-8554},
nodoi = {https://doi.org/10.1016/j.jai.2023.10.002},
url = {https://www.sciencedirect.com/science/article/pii/S294985542300045X},
author = {Jia Liu and Yaochu Jin}
}

@inproceedings{katz2017reluplex,
  title={Reluplex: An efficient {SMT} solver for verifying deep neural networks},
  author={Katz, Guy and Barrett, Clark and Dill, David L and Julian, Kyle and Kochenderfer, Mykel J},
  booktitle={International conference on computer aided verification},
  pages={97--117},
  year={2017},
  organization={Springer}
}

@misc{zhang2018efficientneuralnetworkrobustness,
      title={Efficient Neural Network Robustness Certification with General Activation Functions}, 
      author={Huan Zhang and Tsui-Wei Weng and Pin-Yu Chen and Cho-Jui Hsieh and Luca Daniel},
      year={2018},
      eprint={1811.00866},
      archivePrefix={arXiv},
      primaryClass={cs.LG},
      url={https://arxiv.org/abs/1811.00866}, 
}

@inproceedings{tjeng2017evaluating,
title={Evaluating Robustness of Neural Networks with Mixed Integer Programming},
author={Vincent Tjeng and Kai Y. Xiao and Russ Tedrake},
booktitle={International Conference on Learning Representations},
year={2019},
url={https://openreview.net/forum?id=HyGIdiRqtm},
}

@inproceedings{wong2018scaling,
 author = {Wong, Eric and Schmidt, Frank and Metzen, Jan Hendrik and Kolter, J. Zico},
 booktitle = {Advances in Neural Information Processing Systems},
 noeditor = {S. Bengio and H. Wallach and H. Larochelle and K. Grauman and N. Cesa-Bianchi and R. Garnett},
 pages = {},
 nopublisher = {Curran Associates, Inc.},
 title = {Scaling provable adversarial defenses},
 url = {https://proceedings.neurips.cc/paper_files/paper/2018/file/358f9e7be09177c17d0d17ff73584307-Paper.pdf},
 volume = {31},
 year = {2018}
}

@inproceedings{wang2021betacrownefficientboundpropagation,
 author = {Wang, Shiqi and Zhang, Huan and Xu, Kaidi and Lin, Xue and Jana, Suman and Hsieh, Cho-Jui and Kolter, J. Zico},
 booktitle = {Advances in Neural Information Processing Systems},
 noeditor = {M. Ranzato and A. Beygelzimer and Y. Dauphin and P.S. Liang and J. Wortman Vaughan},
 pages = {29909--29921},
 nopublisher = {Curran Associates, Inc.},
 title = {Beta-{CROWN}: Efficient Bound Propagation with Per-neuron Split Constraints for Neural Network Robustness Verification},
 url = {https://proceedings.neurips.cc/paper_files/paper/2021/file/fac7fead96dafceaf80c1daffeae82a4-Paper.pdf},
 volume = {34},
 year = {2021}
}

@inproceedings{xu2020automatic,
 author = {Xu, Kaidi and Shi, Zhouxing and Zhang, Huan and Wang, Yihan and Chang, Kai-Wei and Huang, Minlie and Kailkhura, Bhavya and Lin, Xue and Hsieh, Cho-Jui},
 booktitle = {Advances in Neural Information Processing Systems},
 noeditor = {H. Larochelle and M. Ranzato and R. Hadsell and M.F. Balcan and H. Lin},
 pages = {1129--1141},
 nopublisher = {Curran Associates, Inc.},
 title = {Automatic Perturbation Analysis for Scalable Certified Robustness and Beyond},
 url = {https://proceedings.neurips.cc/paper_files/paper/2020/file/0cbc5671ae26f67871cb914d81ef8fc1-Paper.pdf},
 volume = {33},
 year = {2020}
}

@misc{kaulen20256th,
      title={The 6th International Verification of Neural Networks Competition (VNN-COMP 2025): Summary and Results}, 
      author={Konstantin Kaulen and Tobias Ladner and Stanley Bak and Christopher Brix and Hai Duong and Thomas Flinkow and Taylor T. Johnson and Lukas Koller and Edoardo Manino and ThanhVu H Nguyen and Haoze Wu},
      year={2025},
      eprint={2512.19007},
      archivePrefix={arXiv},
      primaryClass={cs.LG},
      url={https://arxiv.org/abs/2512.19007}, 
}

@article{zhou2024soundnessbench,
title={{SoundnessBench}: A Soundness Benchmark for Neural Network Verifiers},
author={Xingjian Zhou and Keyi Shen and Andy Xu and Hongji Xu and Cho-Jui Hsieh and Huan Zhang and Zhouxing Shi},
journal={Transactions on Machine Learning Research},
noissn={2835-8856},
year={2025},
url={https://openreview.net/forum?id=UuYYldVLH3},
note={}
}

@article{DBLP:journals/corr/abs-1711-07356,
  author       = {Vincent Tjeng and
                  Russ Tedrake},
  title        = {Evaluating Robustness of Neural Networks with Mixed Integer Programming},
  journal      = {CoRR},
  volume       = {abs/1711.07356},
  year         = {2017},
  url          = {http://arxiv.org/abs/1711.07356},
  eprinttype   = {arXiv},
  eprint       = {1711.07356},
  bibsource    = {dblp computer science bibliography, https://dblp.org}
}

@article{DBLP:journals/corr/abs-1810-12715,
  author       = {Sven Gowal and
                  Krishnamurthy Dvijotham and
                  Robert Stanforth and
                  Rudy Bunel and
                  Chongli Qin and
                  Jonathan Uesato and
                  Relja Arandjelovic and
                  Timothy A. Mann and
                  Pushmeet Kohli},
  title        = {On the Effectiveness of Interval Bound Propagation for Training Verifiably
                  Robust Models},
  journal      = {arXiv:1810.12715},
  year         = {2018},
  url          = {http://arxiv.org/abs/1810.12715},
  eprinttype   = {arXiv},
  eprint       = {1810.12715},
  bibsource    = {dblp computer science bibliography, https://dblp.org}
}

@misc{xu2021fastcompleteenablingcomplete,
      title={Fast and Complete: Enabling Complete Neural Network Verification with Rapid and Massively Parallel Incomplete Verifiers}, 
      author={Kaidi Xu and Huan Zhang and Shiqi Wang and Yihan Wang and Suman Jana and Xue Lin and Cho-Jui Hsieh},
      year={2021},
      eprint={2011.13824},
      archivePrefix={arXiv},
      primaryClass={cs.AI},
      url={https://arxiv.org/abs/2011.13824}, 
}

@InProceedings{10.1007/978-3-030-25540-4_26,
author="Katz, Guy
and Huang, Derek A.
and Ibeling, Duligur
and Julian, Kyle
and Lazarus, Christopher
and Lim, Rachel
and Shah, Parth
and Thakoor, Shantanu
and Wu, Haoze
and Zelji{\'{c}}, Aleksandar
and Dill, David L.
and Kochenderfer, Mykel J.
and Barrett, Clark",
noeditor="Dillig, Isil
and Tasiran, Serdar",
title="The Marabou Framework for Verification and Analysis of Deep Neural Networks",
booktitle="Computer Aided Verification",
year="2019",
publisher="Springer International Publishing",
address="Cham",
pages="443--452"
}

@InProceedings{10.1007/978-3-540-27813-9_14,
author="Ganzinger, Harald
and Hagen, George
and Nieuwenhuis, Robert
and Oliveras, Albert
and Tinelli, Cesare",
noeditor="Alur, Rajeev
and Peled, Doron A.",
title="{DPLL(T)}: Fast Decision Procedures",
booktitle="Computer Aided Verification",
year="2004",
publisher="Springer Berlin Heidelberg",
address="Berlin, Heidelberg",
pages="175--188"
}

@misc{duong2024dplltframeworkverifyingdeep,
      title={A DPLL(T) Framework for Verifying Deep Neural Networks}, 
      author={Hai Duong and ThanhVu Nguyen and Matthew Dwyer},
      year={2024},
      eprint={2307.10266},
      archivePrefix={arXiv},
      primaryClass={cs.LG},
      url={https://arxiv.org/abs/2307.10266}, 
}

@inproceedings{nnenum,
author = {Bak, Stanley},
title = {nnenum: Verification of {ReLU} Neural Networks with Optimized Abstraction Refinement},
year = {2021},
isbn = {978-3-030-76383-1},
publisher = {Springer-Verlag},
address = {Berlin, Heidelberg},
url = {https://doi.org/10.1007/978-3-030-76384-8_2},
doi = {10.1007/978-3-030-76384-8_2},
booktitle = {NASA Formal Methods: 13th International Symposium, NFM 2021, Virtual Event, May 24–28, 2021, Proceedings},
pages = {19–36},
numpages = {18}
}

@misc{tran2020nnvneuralnetworkverification,
      title={{NNV}: The Neural Network Verification Tool for Deep Neural Networks and Learning-Enabled Cyber-Physical Systems}, 
      author={Hoang-Dung Tran and Xiaodong Yang and Diego Manzanas Lopez and Patrick Musau and Luan Viet Nguyen and Weiming Xiang and Stanley Bak and Taylor T. Johnson},
      year={2020},
      eprint={2004.05519},
      archivePrefix={arXiv},
      primaryClass={eess.SY},
      url={https://arxiv.org/abs/2004.05519}, 
}

@article {Gao214262,
	author = {Gao, Peiran and Trautmann, Eric and Yu, Byron and Santhanam, Gopal and Ryu, Stephen and Shenoy, Krishna and Ganguli, Surya},
	title = {A theory of multineuronal dimensionality, dynamics and measurement},
	elocation-id = {214262},
	year = {2017},
	doi = {10.1101/214262},
	publisher = {Cold Spring Harbor Laboratory},
	URL = {https://www.biorxiv.org/content/early/2017/11/12/214262},
	eprint = {https://www.biorxiv.org/content/early/2017/11/12/214262.full.pdf},
	journal = {bioRxiv}
}

@misc{bak2021secondinternationalverificationneural,
      title={The Second International Verification of Neural Networks Competition (VNN-COMP 2021): Summary and Results}, 
      author={Stanley Bak and Changliu Liu and Taylor Johnson},
      year={2021},
      eprint={2109.00498},
      archivePrefix={arXiv},
      primaryClass={cs.LO},
      url={https://arxiv.org/abs/2109.00498}, 
}

@inproceedings{Krizhevsky2009LearningML,
  title={Learning Multiple Layers of Features from Tiny Images},
  author={Alex Krizhevsky},
  year={2009},
  url={https://api.semanticscholar.org/CorpusID:18268744}
}

@ARTICLE{726791,
  author={Lecun, Y. and Bottou, L. and Bengio, Y. and Haffner, P.},
  journal={Proceedings of the IEEE}, 
  title={Gradient-based learning applied to document recognition}, 
  year={1998},
  volume={86},
  number={11},
  pages={2278-2324},
  doi={10.1109/5.726791}}

@article{alexandr2026robustness,
  title={Robustness Verification of Polynomial Neural Networks},
  author={Alexandr, Yulia and Duan, Hao and Mont{\'u}far, Guido},
  journal={arXiv preprint arXiv:2602.06105},
  year={2026}
}

@article{DBLP:journals/corr/abs-2003-03021,
  author       = {Kai Jia and
                  Martin C. Rinard},
  title        = {Exploiting Verified Neural Networks via Floating Point Numerical Error},
  journal      = {CoRR},
  volume       = {abs/2003.03021},
  year         = {2020},
  url          = {https://arxiv.org/abs/2003.03021},
  eprinttype   = {arXiv},
  eprint       = {2003.03021},
  bibsource    = {dblp computer science bibliography, https://dblp.org}
}

@inproceedings{
zombori2021fooling,
title={Fooling a Complete Neural Network Verifier},
author={D{\'a}niel Zombori and Bal{\'a}zs B{\'a}nhelyi and Tibor Csendes and Istv{\'a}n Megyeri and M{\'a}rk Jelasity},
booktitle={International Conference on Learning Representations},
year={2021},
url={https://openreview.net/forum?id=4IwieFS44l}
}

@misc{madry2019deeplearningmodelsresistant,
      title={Towards Deep Learning Models Resistant to Adversarial Attacks}, 
      author={Aleksander Madry and Aleksandar Makelov and Ludwig Schmidt and Dimitris Tsipras and Adrian Vladu},
      year={2019},
      eprint={1706.06083},
      archivePrefix={arXiv},
      primaryClass={stat.ML},
      url={https://arxiv.org/abs/1706.06083}, 
}

@inproceedings{GDVB,
author = {Xu, Dong and Shriver, David and Dwyer, Matthew B. and Elbaum, Sebastian},
title = {Systematic Generation of Diverse Benchmarks for DNN Verification},
year = {2020},
isbn = {978-3-030-53287-1},
publisher = {Springer-Verlag},
address = {Berlin, Heidelberg},
url = {https://doi.org/10.1007/978-3-030-53288-8_5},
doi = {10.1007/978-3-030-53288-8_5},
booktitle = {Computer Aided Verification: 32nd International Conference, CAV 2020, Los Angeles, CA, USA, July 21–24, 2020, Proceedings, Part I},
pages = {97–121},
numpages = {25},
location = {Los Angeles, CA, USA}
}

@INPROCEEDINGS{relusplitter,
  author={Li, Linhan and Nguyen, ThanhVu},
  booktitle={2025 40th IEEE/ACM International Conference on Automated Software Engineering (ASE)}, 
  title={Destabilizing Neurons to Generate Challenging Neural Network Verification Benchmarks}, 
  year={2025},
  volume={},
  number={},
  pages={1351-1363},
  doi={10.1109/ASE63991.2025.00115}}

\appendix 
\newpage

\section{Robust Constructor Details}
\label{app:constructor_details}

This appendix provides the implementation details for VeriStress-GT's robust constructors described in Section~\ref{sec:robust_constructions}.

\subsection{Exact-Radius via MILP}
\label{app:milp}
Let an $L$-layer ReLU MLP have preactivations $s^{(\ell)}=W^{(\ell)}z^{(\ell-1)}+b^{(\ell)}$ and activations $z^{(\ell)}=\sigma(s^{(\ell)})$, with $z^{(0)}=x_0+\delta$ and $\sigma(t)=\max\{t,0\}$. Given valid preactivation bounds $l_j^{(\ell)}\leq s_j^{(\ell)}\leq u_j^{(\ell)}$ obtained via simple interval arithmetic or Interval Bound Propagation \cite{DBLP:journals/corr/abs-1810-12715}, each ReLU node is encoded using a binary variable $a_j^{(\ell)}\in\{0,1\}$ and standard big-$M$ constraints. The interval bounds are used only to define valid big-$M$ constants and do not relax the network semantics. With binary decision variables, the constraints encode the ReLU graph exactly. For a fixed target class $k\neq y$, the minimum $\ell_\infty$ perturbation required to make class $k$ match or exceed the true-class logit is obtained by solving the following program:

\begin{align*}
\min_{\delta,\, t,\, s,\, z,\, a} \quad & t 
&& \text{ (smallest adversarial radius)}\\
\text{s.t.} \quad 
& -t \le \delta_i \le t, \quad \forall i 
&& \text{ ($\ell_\infty$ ball)} \\[3pt]
& z^{(0)} = x_0 + \delta 
&& \text{ (perturbed input)} \\[3pt]
& s^{(\ell)} = W^{(\ell)} z^{(\ell-1)} + b^{(\ell)}, \ \forall \ell 
&& \text{ (affine layers)} \\[3pt]
& z^{(\ell)}_j \ge 0, \quad 
  z^{(\ell)}_j \ge s^{(\ell)}_j 
&& \text{ (ReLU lower bounds)} \\[3pt]
& z^{(\ell)}_j \le u^{(\ell)}_j a^{(\ell)}_j 
&& \text{ (inactive case)} \\[3pt]
& z^{(\ell)}_j \le s^{(\ell)}_j - l^{(\ell)}_j(1-a^{(\ell)}_j)
&& \text{ (active case)} \\[3pt]
& a^{(\ell)}_j \in \{0,1\}, \ \forall \ell,j 
&& \text{ (ReLU on/off)} \\[3pt]
& f_k(x_0+\delta) \ge f_y(x_0+\delta) 
&& \text{ (misclassification)}.
\end{align*}
The certified radius is $r^\star=\min\limits_{k\neq y}t_k^\star$. Thus, for any chosen $\epsilon<r^\star$, the instance is robust by exact optimization.

\subsection{MEAP Constructor Details}
\label{app:meap}

We give the explicit parameterization used in the mutually exclusive activation pattern constructor. For each pair $p\in[P]$, choose a direction $w_p\in\mathbb R^d$ and a margin parameter $\gamma_p>0$. Define
\begin{equation}
    b_{p,1}=\gamma_p-w_p^\top x_0,
    \qquad
    b_{p,2}=\gamma_p+w_p^\top x_0.
    \label{eq:meap_bias_parameterization}
\end{equation}
Then, the paired preactivations satisfy
\begin{equation}
    z_{p,1}(x_0)=z_{p,2}(x_0)=\gamma_p,
    \qquad
    b_{p,1}+b_{p,2}=2\gamma_p>0.
    \label{eq:meap_nominal_values}
\end{equation}
Thus both neurons are active at the nominal point, but the pair is constructed so that they cannot both be inactive at any input. To see this, observe that for any $x$,
\begin{equation}
    z_{p,1}(x)+z_{p,2}(x)=b_{p,1}+b_{p,2}=2\gamma_p.
    \label{eq:meap_sum_identity}
\end{equation}
Therefore, the two preactivations cannot both be negative. Moreover,
\begin{equation}
\begin{aligned}
    r_p(x)&=\max\{\sigma(z_{p,1}(x)),\sigma(z_{p,2}(x))\} \\
    &=\max\{0,z_{p,1}(x),z_{p,2}(x)\} \\
    &\geq\frac{z_{p,1}(x)+z_{p,2}(x)}{2}=\gamma_p.
\end{aligned}
\label{eq:meap_pair_lower_bound}
\end{equation}
Hence each pair contributes a global lower bound $\gamma_p$. Letting $\gamma=\min\limits_{p\in[P]}\gamma_p$, we have $r_p(x)\geq\gamma$ for all $p$ and all $x$. Since the target logit is $f_y(x)=\min\limits_{p\in[P]}r_p(x)$ and all competing logits are fixed to zero, it follows that
\begin{equation}
    f_y(x)-f_k(x)\geq \gamma
    \qquad
    \forall x,\quad \forall k\neq y.
    \label{eq:meap_global_certificate}
\end{equation}
Thus the MEAP network is robust with margin at least $\gamma$ on $\mathcal B_\epsilon(x_0)$.

The construction is made difficult by choosing parameters so that both ReLU neurons in each pair are unstable over the perturbation box. For the $\ell_\infty$ ball $\mathcal B_\epsilon(x_0)$, the minimum values of the two affine preactivations satisfy
\begin{equation}
    \min_{x\in\mathcal B_\epsilon(x_0)} z_{p,1}(x)=\gamma_p-\epsilon\|w_p\|_1,
    \qquad
    \min_{x\in\mathcal B_\epsilon(x_0)} z_{p,2}(x)= \gamma_p-\epsilon\|w_p\|_1.
    \label{eq:meap_min_values}
\end{equation}
Similarly, both preactivations have maximum value $\gamma_p+\epsilon\|w_p\|_1$ over the box. Therefore, if
\begin{equation}
    0<\gamma_p<\epsilon\|w_p\|_1,
    \label{eq:meap_instability_condition}
\end{equation}
then the interval bounds for both $z_{p,1}$ and $z_{p,2}$ cross zero. Each ReLU is individually unstable, even though the pair satisfies the exact lower bound \eqref{eq:meap_pair_lower_bound}. This creates the intended verifier stress, as relaxations that treat the two ReLUs independently may admit the infeasible relaxed state $\sigma(z_{p,1})=\sigma(z_{p,2})=0$, while the true network prevents it. Finally, the aggregation operations can be implemented using ReLU identities. For scalars $a,b$, note:
\begin{equation}
    \max\{a,b\}=b+\sigma(a-b), \qquad
    \min\{a,b\}=a-\sigma(a-b).
    \label{eq:relu_minmax_identities}
\end{equation}
Thus the pairwise maxima $r_p(x)$ and the final minimum over pairs can be represented as a standard piecewise-linear ReLU network.

\subsection{Input-Corner Stress Constructor Details}
\label{app:corner}

We give the formal certificate for the input-corner construction. Let $A\subseteq[d]$ be an active coordinate set with $|A|=d_{act}$, and let $\mathcal Q_\epsilon(x_0;A)$ denote the projection of $\mathcal B_\epsilon(x_0)$ onto these active coordinates. Its vertices are:
\begin{equation}
    \mathcal V_\epsilon(x_0;A)=\{(x_0)_A+\epsilon s:s_i\in\{-1,+1\} \text{ for } i\in A, s_i=0 \text{ for } i \notin A\}.
    \label{eq:corner_vertices}
\end{equation}
The constructed logits depend only on $x_A$. We set $f_y(x)=0$ and $f_k(x)=h_k(x_A)-\beta_k$ for $k\neq y$, where each $h_k:\mathbb R^{d_{act}}\to\mathbb R$ is convex on $\mathcal Q_\epsilon(x_0;A)$. For a desired margin $\gamma>0$, define
\begin{equation}
    \beta_k=\max_{v\in\mathcal V_\epsilon(x_0;A)}h_k(v)+\gamma.
    \label{eq:app_corner_beta}
\end{equation}

\begin{proposition}[Input-corner certificate]
If each $h_k$ is convex on $\mathcal Q_\epsilon(x_0;A)$ and $\beta_k$ is chosen as in \eqref{eq:app_corner_beta}, then the constructed classifier is robust on $\mathcal B_\epsilon(x_0)$ with margin at least $\gamma$.
\end{proposition}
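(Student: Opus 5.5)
The argument is the standard fact that a convex function on a polytope attains its maximum at a vertex. Fix $k\neq y$ and $x\in\mathcal B_\epsilon(x_0)$. The logits depend only on $x_A$, and for the $\ell_\infty$ ball the projection $x_A$ lies in $\mathcal Q_\epsilon(x_0;A)=\prod_{i\in A}[(x_0)_i-\epsilon,(x_0)_i+\epsilon]$. This is a box, hence a polytope whose extreme points are exactly the $2^{d_{act}}$ points of $\mathcal V_\epsilon(x_0;A)$. (We read $\mathcal V_\epsilon$ as points in the active coordinates, matching the domain of $h_k$.)

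By Minkowski/Carathéodory, or directly by writing a box point as a product of one-dimensional convex combinations, we write
\[
x_A=\sum_{v\in\mathcal V_\epsilon(x_0;A)}\lambda_v v, \qquad \lambda_v\ge 0,\quad \sum_v\lambda_v=1 .
\]
A concrete choice of weights is $\lambda_v=\prod_{i\in A}\theta_i^{(1+s_i)/2}(1-\theta_i)^{(1-s_i)/2}$, where $x_i=(x_0)_i+\epsilon(2\theta_i-1)$ and $s$ is the sign pattern of $v$.

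Convexity of $h_k$ on $\mathcal Q_\epsilon$, applied through Jensen's inequality for finite convex combinations, gives
\[
h_k(x_A)\le \sum_v\lambda_v h_k(v)\le \max_{v\in\mathcal V_\epsilon(x_0;A)}h_k(v)=\beta_k-\gamma .
\]
Hence $\mu_k(x)=f_y(x)-f_k(x)=0-(h_k(x_A)-\beta_k)=\beta_k-h_k(x_A)\ge\gamma$. Taking the minimum over the finitely many $k\neq y$ gives $\mu(x)\ge\gamma>0$ for every $x\in\mathcal B_\epsilon(x_0)$. Therefore $\min_{x\in\mathcal B_\epsilon(x_0)}\mu(x)\ge\gamma>0$, which is robustness with margin at least $\gamma$.

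The only step needing care is the vertex representation of the box, that is, that every point of $\mathcal Q_\epsilon$ is a convex combination of its corners. With the explicit product weights above it is a routine induction on $d_{act}$: split along one coordinate as $x_i=\theta_i\,((x_0)_i+\epsilon)+(1-\theta_i)\,((x_0)_i-\epsilon)$ and recurse on the two faces.

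For the ReLU instantiation, convexity holds because $h_k=\sum_j c_{k,j}\sigma(a_j^\top x+b_j)$ with $c_{k,j}\ge0$ is a nonnegative combination of convex functions composed with affine maps.
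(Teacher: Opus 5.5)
Your proof is correct and follows the same route as the paper: write $x_A$ as a convex combination of the box vertices, apply convexity (Jensen) to bound $h_k(x_A)$ by the vertex maximum $\beta_k-\gamma$, and conclude $\mu_k(x)\ge\gamma$ for every $k\neq y$. Your explicit product weights add rigor only to the vertex-representation step, which the paper simply asserts from the fact that $\mathcal Q_\epsilon(x_0;A)$ is a box.
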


\begin{proof}
For any $x\in\mathcal B_\epsilon(x_0)$, we have $x_A\in\mathcal Q_\epsilon(x_0;A)$. Since $\mathcal Q_\epsilon(x_0;A)$ is a box, $x_A$ can be written as a convex combination of its vertices:

\begin{equation}
    x_A = \sum_{v\in\mathcal V_\epsilon(x_0;A)} \lambda_v v_A,
    \qquad \lambda_v\geq 0,
    \qquad \sum_{v\in\mathcal V_\epsilon(x_0;A)} \lambda_v=1
\end{equation}

And by the convexity of $h_k$. we have:

\begin{equation}
h_k(x_A)\leq
    \sum_{v\in\mathcal V_\epsilon(x_0;A)} \lambda_v h_k(v_A) \leq
    \max_{v\in\mathcal V_\epsilon(x_0;A)} h_k(v_A).
    \label{eq:corner_convex_max}
\end{equation}
Therefore, for any $x\in\mathcal B_\epsilon(x_0)$ and $k\neq y$:
\begin{equation}
    \mu_k(x) =f_y(x)-f_k(x)=\beta_k-h_k(x_A)
    \geq
    \beta_k-\max_{v\in\mathcal V_\epsilon(x_0;A)}h_k(v)=\gamma.
    \label{eq:corner_margin}
\end{equation}
Since this holds for every $k\neq y$, we have $\mu(x)=\min\limits_{k\neq y}\mu_k(x)\geq\gamma$ throughout $\mathcal B_\epsilon(x_0)$.
\end{proof}

A ReLU implementation is obtained by choosing
\begin{equation}
    h_k(u)=\sum_{j=1}^{J} c_{k,j}\sigma(a_j^\top u+b_j),
    \qquad
    c_{k,j}\geq 0
    \label{eq:corner_relu_hinges}
\end{equation}
Because $u\mapsto\sigma(a_j^\top u+b_j)$ is convex and nonnegative linear combinations preserve convexity, each $h_k$ is convex. To make the instance difficult, the constructor may place hinges near the nominal input by setting, for example, $b_j=-a_j^\top (x_0)_A$. Then the $j$-th hinge preactivation is zero at $x_0$, and its interval over the active-coordinate box satisfies the following:
\begin{equation}
    a_j^\top(u-(x_0)_A)
    \in
    [-\epsilon\|a_j\|_1,\epsilon\|a_j\|_1],
    \label{eq:corner_hinge_interval}
\end{equation}
so the corresponding ReLU is unstable under IBP bounds. Increasing the number of hinges $J$, the active dimension $d_{\mathrm{act}}$, or the hinge scales $\|a_j\|_1$ increases the number and magnitude of unstable hinge contributions while preserving the robustness certificate.

\subsection{Contractive CNN Constructor Details}
\label{app:contractive_cnn}

We give the formal certificate for the deep contractive CNN constructor. Write the convolutional feature extractor as $\Phi = C_D\circ C_{D-1}\circ\cdots\circ C_1\circ P$ where $P$ is an optional front-end map and $C_1,\dots,C_D$ are contractive convolutional blocks. Let $L_{\mathrm{front}}$ denote the induced $\ell_\infty$ Lipschitz constant of $P$, with $L_{\mathrm{front}}=1$ if no front-end map is used. Assume each block $C_\ell$ is $\ell_\infty$-Lipschitz with constant at most $\lambda<1$. Then, for all $x,x'$,
\begin{equation}
    \|\Phi(x)-\Phi(x')\|_\infty
    \leq
    L_{\mathrm{front}}\lambda^D\|x-x'\|_\infty.
    \label{eq:app_contract_lip}
\end{equation}
This follows directly by composing the Lipschitz constants of $P,C_1,\dots,C_D$. 

Next, define the centered target logit by $f_y(x) = \Gamma+w_y^\top(\Phi(x)-\Phi(x_0))$ and set $f_k(x) =0 \; \forall k\neq y$. The following Proposition defines our robustness claim for the constructor:

\begin{proposition}[Contractive CNN certificate]
If
\begin{equation}
    \Gamma>\|w_y\|_1L_{\mathrm{front}}\lambda^D\epsilon,
\end{equation}
then the classifier $\Phi$ is robust on $\mathcal B_\epsilon(x_0)$.
\end{proposition}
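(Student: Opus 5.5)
The plan is a direct margin argument. It combines the Lipschitz composition bound \eqref{eq:app_contract_lip} with Hölder's inequality. Here "the classifier" means $f=(f_y,f_k)$ built on top of $\Phi$.

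Since every competitor logit is identically $0$, the per-class margins satisfy $\mu_k(x)=f_y(x)$ for all $k\neq y$. Hence $\mu(x)=f_y(x)$, and it suffices to show $f_y(x)>0$ for every $x\in\mathcal B_\epsilon(x_0)$.

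First, fix $x\in\mathcal B_\epsilon(x_0)$, so $\|x-x_0\|_\infty\le\epsilon$. Applying \eqref{eq:app_contract_lip} with $x'=x_0$ gives
\[
\|\Phi(x)-\Phi(x_0)\|_\infty\le L_{\mathrm{front}}\lambda^D\epsilon .
\]
That inequality itself follows from the chain
\[
\|C_D\circ\cdots\circ C_1\circ P(x)-C_D\circ\cdots\circ C_1\circ P(x_0)\|_\infty\le \lambda^D\|P(x)-P(x_0)\|_\infty\le\lambda^D L_{\mathrm{front}}\|x-x_0\|_\infty .
\]
This uses only that each $C_\ell$ is $\lambda$-Lipschitz in $\ell_\infty$, which holds because it is a convolution normalized to induced $\ell_\infty$ norm at most $\lambda$, followed by a $1$-Lipschitz pointwise activation.

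Second, use the $\ell_1$/$\ell_\infty$ duality:
\[
|w_y^\top(\Phi(x)-\Phi(x_0))|\le\|w_y\|_1\|\Phi(x)-\Phi(x_0)\|_\infty\le\|w_y\|_1L_{\mathrm{front}}\lambda^D\epsilon .
\]
Therefore
\[
f_y(x)\ge\Gamma-\|w_y\|_1L_{\mathrm{front}}\lambda^D\epsilon>0
\]
by hypothesis. Consequently $\min_{x\in\mathcal B_\epsilon(x_0)}\mu(x)>0$, which is the robustness definition from Section~\ref{sec:robust_constructions}. The bound is in fact uniform, so robustness holds with margin at least $\Gamma-\|w_y\|_1L_{\mathrm{front}}\lambda^D\epsilon$. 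Compactness of the ball also guarantees that the minimum is attained.

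There is essentially no obstacle here. The only point requiring care is bookkeeping: $\Phi$ takes values in a feature tensor, so $w_y$ and $\Phi(x)$ must be understood as flattened vectors of matching dimension, and the $\ell_\infty$ norm is entrywise. Stating that convention explicitly is what makes Hölder's inequality apply cleanly.
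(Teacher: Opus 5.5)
Your proposal is correct and follows essentially the same route as the paper's proof. Both bound the drift $|w_y^\top(\Phi(x)-\Phi(x_0))|$ by Hölder's inequality combined with the composed Lipschitz bound $L_{\mathrm{front}}\lambda^D\epsilon$, and then conclude $f_y(x)\ge\Gamma-\|w_y\|_1L_{\mathrm{front}}\lambda^D\epsilon>0=f_k(x)$. Your reading of ``the classifier $\Phi$'' as the full classifier $f$ built on $\Phi$ is the intended one.
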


\begin{proof}
For any $x=x_0+\delta$ with $\|\delta\|_\infty\leq\epsilon$, H\"older's inequality and \eqref{eq:app_contract_lip} give
\begin{equation}
\begin{aligned}
    \left|w_y^\top(\Phi(x)-\Phi(x_0))\right|
    &\leq \|w_y\|_1\|\Phi(x)-\Phi(x_0)\|_\infty \\
    &\leq \|w_y\|_1L_{\mathrm{front}}\lambda^D\|x-x_0\|_\infty \\
    &\leq \|w_y\|_1L_{\mathrm{front}}\lambda^D\epsilon.
\end{aligned}
\label{eq:app_contract_drift}
\end{equation}
Therefore,
\begin{equation}
    f_y(x)
    \geq
    \Gamma-\|w_y\|_1L_{\mathrm{front}}\lambda^D\epsilon
    > 0
    = f_k(x) \qquad \forall k\neq y.
\end{equation}
Thus, $\mu(x)=\min_{k\neq y}(f_y(x)-f_k(x))>0$ throughout $\mathcal B_\epsilon(x_0)$, so the instance is robust.
\end{proof}

The centered logit is convenient because the nominal target logit is exactly $\Gamma$ and all competing logits are fixed to zero. An un-centered linear logit head can also be used. Suppose
\begin{equation}
    f_i(x)=b_i+w_i^\top\Phi(x),
    \qquad i\in[c].
\end{equation}
Then for each $k\neq y$, define the nominal class-wise margin $\mu_k(x_0)=f_y(x_0)-f_k(x_0)$. Then, $\mu_k(x)=\mu_k(x_0)+(w_y-w_k)^\top(\Phi(x)-\Phi(x_0))$.

Therefore, by the same Lipschitz argument:
\begin{equation}
    \mu_k(x)
    \geq
    \mu_k(x_0)
    -
    \|w_y-w_k\|_1L_{\mathrm{front}}\lambda^D\epsilon.
\end{equation}
Thus, the uncentered head is robust whenever
\begin{equation}
    \mu_k(x_0)
    >
    \|w_y-w_k\|_1L_{\mathrm{front}}\lambda^D\epsilon
    \qquad
    \text{for all } k\neq y.
    \label{eq:app_contract_uncentered}
\end{equation}

\subsection{Paired-Bias CNN Constructor Details}
\label{app:paired_bias_cnn}

We give the formal certificate and instability parameterization for the paired-bias CNN constructor. Let $Z=\Psi(x)$ denote the feature map produced by any upstream convolutional layers. For each pair $i\in[P]$, the two channels share the same convolutional filter $\mathcal W_i$ and differ only in their biases. Write $ s^{(i)}=\mathcal W_i\star Z\in\mathbb R^{H_{\mathrm{sp}}\times W_{\mathrm{sp}}}$ so that at spatial location $(h,w)$, the paired activations are $\sigma(s_{h,w}^{(i)}+b_i)$ and $\sigma(s_{h,w}^{(i)}+c_i)$, where $b_i>c_i$. The certificate follows from monotonicity of the ReLU function. For any $s\in\mathbb R$ and any $b_i>c_i$, we have $s+b_i>s+c_i$, and since $\sigma$ is monotone and nondecreasing, then:
\begin{equation}
    \sigma(s+b_i)\geq \sigma(s+c_i).
    \label{eq:app_paired_bias_monotone}
\end{equation}
Therefore, each paired difference is nonnegative:
\begin{equation}
    \sigma(s+b_i)-\sigma(s+c_i)\geq 0
    \qquad
    \forall s\in\mathbb R.
    \label{eq:app_paired_bias_nonnegative}
\end{equation}

Next, define the logit outputs as the following:
\begin{equation}
    f_y(x)
    =
    \Gamma+
    \frac{1}{P H_{\mathrm{sp}} W_{\mathrm{sp}}}
    \sum_{i=1}^{P}
    \sum_{h=1}^{H_{\mathrm{sp}}}
    \sum_{w=1}^{W_{\mathrm{sp}}}
    \left[
        \sigma(s_{h,w}^{(i)}+b_i)-\sigma(s_{h,w}^{(i)}+c_i)
    \right],
    \qquad
    f_k(x)=0\quad \text{for all }k\neq y.
    \label{eq:app_paired_bias_logit}
\end{equation}

\begin{proposition}[Paired-bias CNN certificate]
If $\Gamma>0$ and $b_i>c_i$ for every pair $i\in[P]$, then the classifier defined in \eqref{eq:app_paired_bias_logit} is robust with margin at least $\Gamma$ for every input $x$.
\end{proposition}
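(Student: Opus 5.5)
The plan is to reduce the statement to the pointwise inequality \eqref{eq:app_paired_bias_nonnegative}, which has already been established from the monotonicity of $\sigma$. Fix an arbitrary input $x$ and let $Z=\Psi(x)$ be the upstream feature map. For each pair $i\in[P]$ and each spatial location $(h,w)$, the shared response $s^{(i)}_{h,w}=(\mathcal W_i\star Z)_{h,w}$ is some real number. We make no assumption about its value: it may depend on $x$ in an arbitrary way through $\Psi$, and it may lie anywhere in $\mathbb R$. Applying \eqref{eq:app_paired_bias_nonnegative} with $s=s^{(i)}_{h,w}$ shows that every summand $\sigma(s^{(i)}_{h,w}+b_i)-\sigma(s^{(i)}_{h,w}+c_i)$ in \eqref{eq:app_paired_bias_logit} is nonnegative.

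Next, we sum these $PH_{\mathrm{sp}}W_{\mathrm{sp}}$ nonnegative terms. Multiplying by the positive normalizing factor $1/(PH_{\mathrm{sp}}W_{\mathrm{sp}})$ preserves the sign. This gives $f_y(x)\geq\Gamma$. Since $f_k(x)=0$ for every $k\neq y$, we obtain $\mu_k(x)=f_y(x)-f_k(x)\geq\Gamma$ for each $k\neq y$. Taking the minimum over $k$ then gives $\mu(x)\geq\Gamma>0$.

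Because $x$ was arbitrary, this bound holds on all of $\mathbb R^{C_{\mathrm{in}}\times H_{\mathrm{in}}\times W_{\mathrm{in}}}$. In particular it holds on $\mathcal B_\epsilon(x_0)$ for every $\epsilon$ and every norm. The instance is therefore robust with margin at least $\Gamma$, as claimed.

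There is no real obstacle in this argument. The one point that needs care is that both channels of a pair must use literally the same response $s^{(i)}_{h,w}$. That is, they must share the filter $\mathcal W_i$, the input $Z$, and the same padding and stride, so that the two ReLU arguments differ exactly by the constant $b_i-c_i>0$. I would state this explicitly, since it is precisely the coupling that an independent per-neuron relaxation loses.
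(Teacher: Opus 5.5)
Your proposal is correct and follows the paper's proof essentially step for step. Monotonicity of $\sigma$ makes every paired summand nonnegative for any value of the shared response, hence $f_y(x)\geq\Gamma$ for all $x$; since $f_k\equiv 0$ for $k\neq y$, this gives $\mu(x)\geq\Gamma>0$ on every perturbation set.
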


\begin{proof}
By \eqref{eq:app_paired_bias_nonnegative}, every summand in \eqref{eq:app_paired_bias_logit} is nonnegative. Hence $f_y(x)\geq \Gamma$ for every input $x$. Since $f_k(x)=0$ for all $k\neq y$, we have
\[
    f_y(x)-f_k(x)\geq \Gamma>0
\]
for every $x$ and every $k\neq y$. Therefore
\[
    \mu(x)=\min_{k\neq y}(f_y(x)-f_k(x))\geq \Gamma,
\]
so the classifier is robust on any perturbation set $\mathcal B_\epsilon(x_0)$.
\end{proof}

To make the instance difficult for relaxation-based verifiers, the constructor chooses biases so that many paired ReLUs are unstable over the perturbation set. Suppose interval propagation gives bounds
\[
    s_{h,w}^{(i)}\in
    [s_{\mathrm{lo},h,w}^{(i)},s_{\mathrm{hi},h,w}^{(i)}]
\]
for the shared convolutional response over $\mathcal B_\epsilon(x_0)$. In practice, we choose a channelwise center $t_i$, such as a spatial average of interval midpoints, and a bias half-gap $\Delta_i>0$, and set
\begin{equation}
    b_i=-t_i+\Delta_i,
    \qquad
    c_i=-t_i-\Delta_i.
    \label{eq:app_paired_bias_biases}
\end{equation}
Then $b_i-c_i=2\Delta_i>0$, so the monotonicity certificate remains valid. For a given spatial location $(h,w)$, the shifted intervals are
\begin{equation}
    s_{h,w}^{(i)}+b_i
    \in
    [s_{\mathrm{lo},h,w}^{(i)}-t_i+\Delta_i,\;
     s_{\mathrm{hi},h,w}^{(i)}-t_i+\Delta_i],
\end{equation}
and
\begin{equation}
    s_{h,w}^{(i)}+c_i
    \in
    [s_{\mathrm{lo},h,w}^{(i)}-t_i-\Delta_i,\;
     s_{\mathrm{hi},h,w}^{(i)}-t_i-\Delta_i].
    \label{eq:app_paired_bias_intervals}
\end{equation}
When these intervals cross zero, the corresponding ReLUs are unstable under interval bounds. Therefore, the constructor can create many unstable paired activations while preserving the exact nonnegativity guarantee. Independent relaxations may lose the fact that the two ReLUs are functions of the same scalar $s_{h,w}^{(i)}$, and can therefore produce a negative lower bound on a quantity that is globally nonnegative.

\subsection{Fixed-Ordering Softmax Attention Constructor Details}
\label{app:fixed_attention}

The robustness certificate for the fixed-ordering attention constructor has two parts. First, we show that the nominal row-wise score maximizers remain unchanged throughout the perturbation set. This ensures that the attention score pattern is stable. Second, we bound the total change in the attention module (i.e. score matrix times the value matrix $XW_V$) output. This bound, combined with a Lipschitz downstream head and a sufficiently large nominal classification margin, gives the final robustness certificate. Thus, the first proposition certifies score-pattern stability, while the second proposition certifies classification robustness. Let $X=X_0+\Delta X$, and write $\delta^{(i)}$ for the $i$-th row of $\Delta X$. We assume $\|\Delta X\|_\infty\leq \epsilon$, so $\|\delta^{(i)}\|_2\leq \epsilon\sqrt{d_{\mathrm{tok}}}$ for every $i$.

\paragraph{Part 1: Score-gap stability.}
For any $j\neq\pi_i^\star$, since $X=X_0+\Delta X$, we have
\begin{align}
    S_{i,\pi_i^\star}(X)-S_{ij}(X)
    &=
    (x_0^{(i)}+\delta^{(i)})^\top M
    \left[
        (x_0^{(\pi_i^\star)}+\delta^{(\pi_i^\star)})
        -
        (x_0^{(j)}+\delta^{(j)})
    \right] \notag\\
    &=
    {x_0^{(i)}}^\top M
    \left(x_0^{(\pi_i^\star)}-x_0^{(j)}\right)
    +
    {\delta^{(i)}}^\top M
    \left(x_0^{(\pi_i^\star)}-x_0^{(j)}\right) \notag\\
    &\qquad
    +
    {x_0^{(i)}}^\top M
    \left(\delta^{(\pi_i^\star)}-\delta^{(j)}\right)
    +
    {\delta^{(i)}}^\top M
    \left(\delta^{(\pi_i^\star)}-\delta^{(j)}\right).
    \label{eq:app_attention_gap_expand_full}
\end{align}
The first term is the nominal score gap,
\[
    {x_0^{(i)}}^\top M
    \left(x_0^{(\pi_i^\star)}-x_0^{(j)}\right)
    =
    S_{i,\pi_i^\star}(X_0)-S_{ij}(X_0)
    =
    \Delta_{ij}.
\]
Therefore,
\begin{align}
    &\left[S_{i,\pi_i^\star}(X)-S_{ij}(X)\right]-\Delta_{ij} \notag\\
    &\qquad =
    {\delta^{(i)}}^\top M
    \left(x_0^{(\pi_i^\star)}-x_0^{(j)}\right)
    +
    {x_0^{(i)}}^\top M
    \left(\delta^{(\pi_i^\star)}-\delta^{(j)}\right)+
    {\delta^{(i)}}^\top M
    \left(\delta^{(\pi_i^\star)}-\delta^{(j)}\right).
    \label{eq:app_attention_gap_expansion}
\end{align}
Using $|u^\top Mv|\leq \|M\|_{\mathrm{op}}\|u\|_2\|v\|_2$, $\|\delta^{(i)}\|_2\leq\epsilon\sqrt{d_{\mathrm{tok}}}$, and $\|\delta^{(\pi_i^\star)}-\delta^{(j)}\|_2\leq 2\epsilon\sqrt{d_{\mathrm{tok}}}$, we obtain
\begin{equation}
    \left|
    \left[S_{i,\pi_i^\star}(X)-S_{ij}(X)\right]-\Delta_{ij}
    \right|
    \leq
    \epsilon C_{ij}(\epsilon),
    \label{eq:app_attention_gap_bound}
\end{equation}
where
\begin{equation}
    C_{ij}(\epsilon)
    =
    \sqrt{d_{\mathrm{tok}}}\,\|M\|_{\mathrm{op}}
    \left(
        \|x_0^{(\pi_i^\star)}-x_0^{(j)}\|_2
        +
        2\|x_0^{(i)}\|_2
    \right)
    +
    2\epsilon d_{\mathrm{tok}}\,\|M\|_{\mathrm{op}}.
    \label{eq:app_attention_Cij}
\end{equation}

\begin{proposition}[Attention score-pattern stability]
If $\Delta_{ij}>\epsilon C_{ij}(\epsilon)$ for every $i$ and every $j\neq\pi_i^\star$, then $\pi(X)=\pi^\star$ for every $X$ satisfying $\|X-X_0\|_\infty\leq\epsilon$.
\end{proposition}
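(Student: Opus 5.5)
The proof is a direct consequence of the gap bound \eqref{eq:app_attention_gap_bound}, which was derived just before the proposition for every $X$ with $\|X-X_0\|_\infty\le\epsilon$. The plan is to fix an arbitrary such $X$, a row $i$, and a competitor $j\neq\pi_i^\star$. We then use the reverse triangle inequality on \eqref{eq:app_attention_gap_bound} to get
\[
S_{i,\pi_i^\star}(X)-S_{ij}(X)\;\ge\;\Delta_{ij}-\epsilon C_{ij}(\epsilon)\;>\;0,
\]
where the strict inequality is exactly the hypothesis $\Delta_{ij}>\epsilon C_{ij}(\epsilon)$.

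Since this holds for every $j\neq\pi_i^\star$, the entry $S_{i,\pi_i^\star}(X)$ strictly exceeds every other entry in row $i$ of $S(X)$. Hence the row-wise argmax is unique and equals $\pi_i^\star$. Doing this for every row $i$ gives $\pi(X)=\pi^\star$, where $\pi(X)$ denotes the vector of row-wise maximizers of $S(X)$. Since $X$ was arbitrary in $\mathcal B_\epsilon(X_0)$, the claim follows.

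The only point needing care is verifying that \eqref{eq:app_attention_gap_bound} holds uniformly over the ball with the stated constant $C_{ij}(\epsilon)$. I would recheck the three terms of \eqref{eq:app_attention_gap_expansion}, each bounded via $|u^\top Mv|\le\|M\|_{\mathrm{op}}\|u\|_2\|v\|_2$:
\begin{itemize}
\item the first term is at most $\epsilon\sqrt{d_{\mathrm{tok}}}\|M\|_{\mathrm{op}}\|x_0^{(\pi_i^\star)}-x_0^{(j)}\|_2$;
\item the second term is at most $2\epsilon\sqrt{d_{\mathrm{tok}}}\|M\|_{\mathrm{op}}\|x_0^{(i)}\|_2$;
\item the third term is at most $2\epsilon^2 d_{\mathrm{tok}}\|M\|_{\mathrm{op}}$.
\end{itemize}
Each uses $\|\delta^{(\cdot)}\|_2\le\epsilon\sqrt{d_{\mathrm{tok}}}$, which follows from the entrywise $\ell_\infty$ constraint. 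The three bounds sum to $\epsilon C_{ij}(\epsilon)$, so I expect no real obstacle. One small subtlety is that when $j=i$ or $\pi_i^\star=i$ the perturbation rows coincide, but the bounds above remain valid, since they never assumed the rows were distinct.
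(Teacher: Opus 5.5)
Your proposal is correct and follows the paper's proof: the paper also reads off $S_{i,\pi_i^\star}(X)-S_{ij}(X)\ge\Delta_{ij}-\epsilon C_{ij}(\epsilon)>0$ from the gap bound and concludes that each row-wise maximizer stays $\pi_i^\star$. Your term-by-term recheck reproduces exactly the paper's constant $C_{ij}(\epsilon)$.
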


\begin{proof}
By \eqref{eq:app_attention_gap_bound},
\begin{equation}
    S_{i,\pi_i^\star}(X)-S_{ij}(X)
    \geq
    \Delta_{ij}-\epsilon C_{ij}(\epsilon)
    >
    0
\end{equation}
for every $j\neq\pi_i^\star$. Hence the row-wise maximizer remains $\pi_i^\star$ for every row $i$.
\end{proof}

\paragraph{Part 2: Attention-output perturbation.}
We next bound the attention output. Let $A_0=A(X_0)$, $\widetilde A=A(X)$, $V_0=X_0W_V$, and $\Delta V=\Delta XW_V$. The exact decomposition is
\begin{equation}
    \mathrm{Attn}(X)-\mathrm{Attn}(X_0)
    =
    (\widetilde A-A_0)V_0
    +
    A_0\Delta V
    +
    (\widetilde A-A_0)\Delta V.
    \label{eq:app_attention_decomposition}
\end{equation}

For every score entry, bilinearity gives:
\begin{equation}
    |S_{ij}(X)-S_{ij}(X_0)|
    \leq
    \epsilon\sqrt{d_{\mathrm{tok}}}\,\|M\|_{\mathrm{op}}
    \left(
        \|x_0^{(i)}\|_2+\|x_0^{(j)}\|_2
    \right)
    +
    \epsilon^2 d_{\mathrm{tok}}\,\|M\|_{\mathrm{op}}.
    \label{eq:app_attention_score_entry}
\end{equation}
Define
\begin{equation}
    \bar B_S(\epsilon)
    =
    \max_{i,j}
    \left[
        \sqrt{d_{\mathrm{tok}}}\,\|M\|_{\mathrm{op}}
        \left(
            \|x_0^{(i)}\|_2+\|x_0^{(j)}\|_2
        \right)
        +
        \epsilon d_{\mathrm{tok}}\,\|M\|_{\mathrm{op}}
    \right].
    \label{eq:app_attention_BS}
\end{equation}
Then $\|S(X)-S(X_0)\|_\infty\leq\epsilon\bar B_S(\epsilon)$.

Using the softmax Jacobian bound $\|\nabla \mathrm{softmax}(z)\|_{\mathrm{op}}\leq \frac{1}{2}$, for each attention row $i$ we have
\begin{equation}
    \|\widetilde a_i-a_i^0\|_1
    \leq
    \frac{n}{2}\epsilon\bar B_S(\epsilon).
    \label{eq:app_attention_weight_bound}
\end{equation}
Also, $\|\Delta V\|_{2,\infty}\leq \epsilon\sqrt{d_{\mathrm{tok}}}\,\|W_V\|_{\mathrm{op}}$, where $\|Z\|_{2,\infty}=\max_i\|Z[i,:]\|_2$. Applying these bounds to the three terms in \eqref{eq:app_attention_decomposition} results in:
\begin{equation}
    \|\mathrm{Attn}(X)-\mathrm{Attn}(X_0)\|_{2,\infty}
    \leq
    \epsilon L_{\mathrm{attn}}(\epsilon),
    \label{eq:app_attention_output_2inf}
\end{equation}
where
\begin{equation}
    L_{\mathrm{attn}}(\epsilon)
    =
    \frac{n}{2}\bar B_S(\epsilon)\|V_0\|_{2,\infty}
    +
    \sqrt{d_{\mathrm{tok}}}\,\|W_V\|_{\mathrm{op}}
    +
    \frac{n}{2}\epsilon\bar B_S(\epsilon)\sqrt{d_{\mathrm{tok}}}\,\|W_V\|_{\mathrm{op}}.
    \label{eq:app_attention_Lattn}
\end{equation}
Since $\|Z\|_F\leq\sqrt n\|Z\|_{2,\infty}$, this implies
\begin{equation}
    \|\mathrm{Attn}(X)-\mathrm{Attn}(X_0)\|_F
    \leq
    \sqrt n L_{\mathrm{attn}}(\epsilon)\epsilon.
    \label{eq:app_attention_output_fro}
\end{equation}

\begin{proposition}[Fixed-pattern attention robustness]
Let $f=h\circ\mathrm{Attn}$, and suppose each logit of $h$ is $L_h$-Lipschitz with respect to the Frobenius norm. If the score-pattern stability condition holds and $\mu(X_0)>2L_h\sqrt n L_{\mathrm{attn}}(\epsilon)\epsilon$, then $f$ is robust on $\mathcal B_\epsilon(X_0)$.
\end{proposition}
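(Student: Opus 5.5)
The argument combines the attention-output bound \eqref{eq:app_attention_output_fro} with the Lipschitz property of $h$. Fix $X$ with $\|X-X_0\|_\infty\leq\epsilon$ and a competing class $k\neq y$. Write the class-wise margin as $\mu_k(X)=h_y(\mathrm{Attn}(X))-h_k(\mathrm{Attn}(X))$. Then compare it to its nominal value $\mu_k(X_0)=h_y(\mathrm{Attn}(X_0))-h_k(\mathrm{Attn}(X_0))$.

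The first step uses the triangle inequality and the $L_h$-Lipschitz property of each logit of $h$ in the Frobenius norm:
\[
|\mu_k(X)-\mu_k(X_0)|\leq |h_y(\mathrm{Attn}(X))-h_y(\mathrm{Attn}(X_0))|+|h_k(\mathrm{Attn}(X))-h_k(\mathrm{Attn}(X_0))|\leq 2L_h\|\mathrm{Attn}(X)-\mathrm{Attn}(X_0)\|_F.
\]
The second step invokes \eqref{eq:app_attention_output_fro}, which bounds the right-hand side by $2L_h\sqrt n L_{\mathrm{attn}}(\epsilon)\epsilon$. This gives
\[
\mu_k(X)\geq \mu_k(X_0)-2L_h\sqrt n L_{\mathrm{attn}}(\epsilon)\epsilon\geq \mu(X_0)-2L_h\sqrt n L_{\mathrm{attn}}(\epsilon)\epsilon>0 .
\]
Taking the minimum over $k\neq y$ yields $\mu(X)>0$ on all of $\mathcal B_\epsilon(X_0)$, which is robustness. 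The score-pattern stability hypothesis from the previous proposition is not used in these inequalities. It only ensures that the nominal maximizers, and hence the intended attention structure, are preserved.

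The main obstacle is justifying \eqref{eq:app_attention_output_fro} itself, since the conclusion inherits everything from it. To do this I would bound the three terms of \eqref{eq:app_attention_decomposition} row by row in $\|\cdot\|_{2,\infty}$, as follows.
\begin{itemize}
\item The term $(\widetilde A-A_0)V_0$ is bounded by $\max_i\|\widetilde a_i-a_i^0\|_1\,\|V_0\|_{2,\infty}$, using \eqref{eq:app_attention_weight_bound}.
\item The term $A_0\Delta V$ is bounded by $\|\Delta V\|_{2,\infty}$, because rows of $A_0$ are probability vectors.
\item The cross term $(\widetilde A-A_0)\Delta V$ is bounded by the product of the two small quantities.
\end{itemize}
Summing these three bounds gives $\epsilon L_{\mathrm{attn}}(\epsilon)$. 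The passage from $\|\cdot\|_{2,\infty}$ to $\|\cdot\|_F$ then costs the factor $\sqrt n$.

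The delicate point in this step is \eqref{eq:app_attention_weight_bound}. Converting the softmax Jacobian operator-norm bound into an $\ell_1$ bound on the row change needs a mean-value argument along the segment between the two score rows. It also needs the norm conversions $\|\cdot\|_1\leq\sqrt n\|\cdot\|_2$ and $\|\cdot\|_2\leq\sqrt n\|\cdot\|_\infty$, which together give the factor $n$. I would check that these constants line up with \eqref{eq:app_attention_Lattn}, and since that display is stated earlier I would cite it rather than rederive it.
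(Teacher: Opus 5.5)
Your proposal is correct and follows essentially the same argument as the paper: bound each logit's drift by $L_h\sqrt n L_{\mathrm{attn}}(\epsilon)\epsilon$ using \eqref{eq:app_attention_output_fro}, then subtract twice this quantity from the nominal margin $\mu(X_0)$. Your observation that the score-pattern stability hypothesis is never used in these inequalities applies equally to the paper's own proof. Your sketch of how \eqref{eq:app_attention_output_fro} follows from the three-term decomposition reproduces the constants in \eqref{eq:app_attention_Lattn}.
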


\begin{proof}
By \eqref{eq:app_attention_output_fro} and the logit-wise $L_h$-Lipschitzness of $h$, for each class $r$,
\begin{equation}
    |f_r(X)-f_r(X_0)|
    \leq
    L_h\sqrt n L_{\mathrm{attn}}(\epsilon)\epsilon.
\end{equation}
Therefore, for any $k\neq y$,
\begin{equation}
\begin{aligned}
    f_y(X)-f_k(X)
    &\geq
    f_y(X_0)-f_k(X_0)
    -
    |f_y(X)-f_y(X_0)|
    -
    |f_k(X)-f_k(X_0)| \\
    &\geq
    \mu(X_0)-2L_h\sqrt n L_{\mathrm{attn}}(\epsilon)\epsilon
    >
    0.
\end{aligned}
\end{equation}
Thus the predicted class remains $y$ for every $X\in\mathcal B_\epsilon(X_0)$.
\end{proof}

\subsection{Dominant-Key Linear Attention Constructor Details}
\label{app:linear_dominance_attention}

The robustness certificate has three steps, and we use the same notation as in the main text. First, we show that the dominance condition implies that each attention row is close to the value vector associated with its dominant key. Second, we show that this closeness gives a uniform bound on the full attention-output drift. Third, this output-drift bound combines with the nominal classification margin to certify robustness.

\paragraph{Part 1: Closeness to dominant value.}
The dominance condition says that for query row $i$, the unnormalized weight assigned to key $j_i^\star$ is at least $\rho_i$ times the total weight assigned to all other keys. The next lemma makes the consequence precise:

\begin{lemma}[Closeness to the dominant value]
\label{lem:linear_dominant_closeness}
If row $i$ satisfies the dominance condition in \eqref{eq:linear_attention_dominance} throughout $\mathcal B_\epsilon(X_0)$, then for every $X\in\mathcal B_\epsilon(X_0)$,
\begin{equation}
    \left\|
    \mathrm{Attn}_{\mathrm{lin}}(X)[i,:]
    -
    V(X)[j_i^\star,:]
    \right\|_2
    \leq
    \frac{1}{1+\rho_i}
    \max_{j\neq j_i^\star}
    \left\|
    V(X)[j,:]-V(X)[j_i^\star,:]
    \right\|_2.
    \label{eq:app_linear_closeness}
\end{equation}
\end{lemma}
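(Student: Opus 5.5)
The plan is to write the row output as a convex combination of value vectors and subtract the dominant value vector inside that combination. Fix $X\in\mathcal B_\epsilon(X_0)$ and a row $i$, and abbreviate $v_j:=V(X)[j,:]$, $w_j:=w_{ij}(X)$, $\alpha_j:=\alpha_{ij}(X)$, $j^\star:=j_i^\star$. Since $\phi$ is nonnegative with positive row sums, the denominator $W:=\sum_\ell w_\ell$ is positive and the $\alpha_j$ are nonnegative with $\sum_j\alpha_j=1$.

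\textbf{Step 1 (rewrite the row output).} Using $\sum_j\alpha_j=1$, we have
\begin{equation*}
\mathrm{Attn}_{\mathrm{lin}}(X)[i,:]-v_{j^\star}=\sum_{j}\alpha_j(v_j-v_{j^\star})=\sum_{j\neq j^\star}\alpha_j(v_j-v_{j^\star}).
\end{equation*}
The $j=j^\star$ term vanishes, which is exactly why only the non-dominant mass matters.

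\textbf{Step 2 (bound the norm).} By the triangle inequality,
\begin{equation*}
\Bigl\|\sum_{j\neq j^\star}\alpha_j(v_j-v_{j^\star})\Bigr\|_2\leq\Bigl(\sum_{j\neq j^\star}\alpha_j\Bigr)\max_{j\neq j^\star}\|v_j-v_{j^\star}\|_2 .
\end{equation*}

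\textbf{Step 3 (bound the non-dominant mass).} Set $R:=\sum_{j\neq j^\star}w_j$. The dominance condition \eqref{eq:linear_attention_dominance} gives $w_{j^\star}\geq\rho_i R$. Hence $W=w_{j^\star}+R\geq(1+\rho_i)R$, so
\begin{equation*}
\sum_{j\neq j^\star}\alpha_j=\frac{R}{W}\leq\frac{1}{1+\rho_i}.
\end{equation*}
This is equivalent to the statement in the main text that the dominant key receives mass at least $\rho_i/(1+\rho_i)$. Combining Steps 2 and 3 yields \eqref{eq:app_linear_closeness}.

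There is no serious obstacle. The only point needing care is the degenerate case $R=0$. In that case all non-dominant weights vanish, $W=w_{j^\star}>0$ by the positivity assumption on $\phi$'s row sums, the left-hand side of \eqref{eq:app_linear_closeness} is $0$, and the bound holds trivially. The argument is pointwise in $X$, so assuming dominance throughout $\mathcal B_\epsilon(X_0)$ gives the claim for every $X$ in the set.
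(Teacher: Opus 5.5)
Your proposal is correct and follows essentially the same route as the paper. The paper also writes the deviation as $\sum_{j\neq j_i^\star}\alpha_{ij}(X)(V(X)[j,:]-V(X)[j_i^\star,:])$, applies the triangle inequality, and bounds the off-dominant mass $1-\alpha_{ij_i^\star}(X)$ by $1/(1+\rho_i)$ via the dominance condition; your separate treatment of $R=0$ is harmless but unnecessary, since $R/W\le 1/(1+\rho_i)$ already covers it once the denominator is positive.
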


\begin{proof}
Let $S_i(X)=\sum\limits_{j\neq j_i^\star}w_{ij}(X)$ and $w_i^\star(X)=w_{i j_i^\star}(X)$. We also have that $w_i^\star(X)\geq \rho_i S_i(X)$ by \eqref{eq:linear_attention_dominance}. Therefore:
\begin{equation}
    \alpha_{i j_i^\star}(X)
    =
    \frac{w_i^\star(X)}{w_i^\star(X)+S_i(X)}
    \geq
    \frac{\rho_i}{1+\rho_i},
    \qquad
    1-\alpha_{i j_i^\star}(X)
    \leq
    \frac{1}{1+\rho_i}.
    \label{eq:app_linear_mass_bound}
\end{equation}
Using the definition of linear attention,
\begin{equation}
    \mathrm{Attn}_{\mathrm{lin}}(X)[i,:]-V(X)[j_i^\star,:]
    =
    \sum_{j\neq j_i^\star}
    \alpha_{ij}(X)
    \left(
        V(X)[j,:]-V(X)[j_i^\star,:]
    \right).
    \label{eq:app_linear_deviation_identity}
\end{equation}
Applying the triangle inequality and \eqref{eq:app_linear_mass_bound} gives
\begin{equation}
\begin{aligned}
    \left\|
    \mathrm{Attn}_{\mathrm{lin}}(X)[i,:]-V(X)[j_i^\star,:]
    \right\|_2
    &\leq
    \sum_{j\neq j_i^\star}
    \alpha_{ij}(X)
    \left\|
    V(X)[j,:]-V(X)[j_i^\star,:]
    \right\|_2 \\
    &\leq
    \left(1-\alpha_{i j_i^\star}(X)\right)
    \max_{j\neq j_i^\star}
    \left\|
    V(X)[j,:]-V(X)[j_i^\star,:]
    \right\|_2 \\
    &\leq
    \frac{1}{1+\rho_i}
    \max_{j\neq j_i^\star}
    \left\|
    V(X)[j,:]-V(X)[j_i^\star,:]
    \right\|_2.
\end{aligned}
\end{equation}
\end{proof}

\paragraph{Part 2: Attention-output perturbation.}
The previous lemma controls each attention row relative to its dominant value vector. We now compare the full attention outputs at $X$ and $X_0$ by inserting the dominant value vector at both inputs. For compactness, let $V_0=V(X_0)$ and set $L_V=\sqrt{d_{\mathrm{tok}}}\|W_V\|_{\mathrm{op}}$. If $X=X_0+\Delta X$ and $\|\Delta X\|_\infty\leq\epsilon$, then each token perturbation satisfies $\|\delta^{(i)}\|_2\leq\epsilon\sqrt{d_{\mathrm{tok}}}$, so
\begin{equation}
    \|V(X)[j,:]-V_0[j,:]\|_2
    =
    \|\delta^{(j)}W_V\|_2
    \leq
    \epsilon L_V.
    \label{eq:app_value_drift}
\end{equation}

\begin{proposition}[Dominant-key attention-output perturbation]
\label{prop:linear_attention_perturbation}
Suppose every row satisfies the dominance condition in \eqref{eq:linear_attention_dominance} throughout $\mathcal B_\epsilon(X_0)$, and let $\rho=\min\limits_i\rho_i$. Then for every $X\in\mathcal B_\epsilon(X_0)$,
\begin{equation}
    \|\mathrm{Attn}_{\mathrm{lin}}(X)-\mathrm{Attn}_{\mathrm{lin}}(X_0)\|_{2,\infty}
    \leq
    \frac{2}{1+\rho}D_V(X_0)
    +
    \left(
        1+\frac{2}{1+\rho}
    \right)
    \epsilon L_V.
    \label{eq:app_linear_attention_bound}
\end{equation}
\end{proposition}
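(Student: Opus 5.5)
The plan is to insert the dominant value vector at both $X$ and $X_0$ and use the triangle inequality row by row. Fix a row $i$ and write $j^\star=j_i^\star$. Then
\begin{align*}
\mathrm{Attn}_{\mathrm{lin}}(X)[i,:]-\mathrm{Attn}_{\mathrm{lin}}(X_0)[i,:]
&=\bigl(\mathrm{Attn}_{\mathrm{lin}}(X)[i,:]-V(X)[j^\star,:]\bigr)
+\bigl(V(X)[j^\star,:]-V_0[j^\star,:]\bigr)\\
&\qquad+\bigl(V_0[j^\star,:]-\mathrm{Attn}_{\mathrm{lin}}(X_0)[i,:]\bigr).
\end{align*}
Each of the three terms will be bounded separately. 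Taking the maximum over $i$ then gives the $\|\cdot\|_{2,\infty}$ bound.

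The middle term is bounded directly by the value-drift estimate \eqref{eq:app_value_drift}, giving $\epsilon L_V$. For the third term, $X_0$ lies in $\mathcal B_\epsilon(X_0)$, so Lemma~\ref{lem:linear_dominant_closeness} applies at $X_0$. It bounds the third term by $\frac{1}{1+\rho_i}\max_{j\neq j^\star}\|V_0[j,:]-V_0[j^\star,:]\|_2\le \frac{1}{1+\rho}D_V(X_0)$. Here I use $\rho_i\ge\rho$, so $\frac{1}{1+\rho_i}\le\frac{1}{1+\rho}$, together with the definition of $D_V(X_0)$ as a max over rows.

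For the first term, Lemma~\ref{lem:linear_dominant_closeness} is applied at $X$. This gives $\frac{1}{1+\rho}\max_{j\neq j^\star}\|V(X)[j,:]-V(X)[j^\star,:]\|_2$, and I will reduce it to nominal quantities. By the triangle inequality and \eqref{eq:app_value_drift} applied to both rows $j$ and $j^\star$,
\[
\|V(X)[j,:]-V(X)[j^\star,:]\|_2\le \|V_0[j,:]-V_0[j^\star,:]\|_2+2\epsilon L_V\le D_V(X_0)+2\epsilon L_V .
\]
So the first term is at most $\frac{1}{1+\rho}D_V(X_0)+\frac{2}{1+\rho}\epsilon L_V$.

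Summing the three bounds gives $\frac{2}{1+\rho}D_V(X_0)+\bigl(1+\frac{2}{1+\rho}\bigr)\epsilon L_V$ for every row, which is exactly \eqref{eq:app_linear_attention_bound}.

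The only delicate point is the first term, where the spread of the perturbed value vectors has to be converted back to the nominal spread $D_V(X_0)$. This is also where the factor $2$ multiplying $\epsilon L_V/(1+\rho)$ comes from. I also need to check that the dominance hypothesis is assumed on all of $\mathcal B_\epsilon(X_0)$, including $X_0$ itself, which the statement provides.
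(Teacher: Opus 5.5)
Your proposal is correct and follows the paper's proof essentially step for step. It uses the same three-term insertion of the dominant value vector at $X$ and $X_0$, applies Lemma~\ref{lem:linear_dominant_closeness} at both points, and converts the perturbed value spread to $D_V(X_0)+2\epsilon L_V$ by the triangle inequality, while being slightly more explicit than the paper about $\frac{1}{1+\rho_i}\le\frac{1}{1+\rho}$ and about why the lemma applies at $X_0$.
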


\begin{proof}
Fix a row $i$. Insert the value vector associated with the dominant key at both $X$ and $X_0$:
\begin{equation}
\begin{aligned}
    &\left\|
    \mathrm{Attn}_{\mathrm{lin}}(X)[i,:]
    -
    \mathrm{Attn}_{\mathrm{lin}}(X_0)[i,:]
    \right\|_2 \\
    &\quad\leq
    \left\|
    \mathrm{Attn}_{\mathrm{lin}}(X)[i,:]-V(X)[j_i^\star,:]
    \right\|_2
    +
    \left\|
    V(X)[j_i^\star,:]-V_0[j_i^\star,:]
    \right\|_2 \\
    &\qquad+
    \left\|
    V_0[j_i^\star,:]-\mathrm{Attn}_{\mathrm{lin}}(X_0)[i,:]
    \right\|_2.
\end{aligned}
\label{eq:app_linear_three_terms}
\end{equation}
By Lemma~\ref{lem:linear_dominant_closeness} applied at $X$,
\begin{equation}
    \left\|
    \mathrm{Attn}_{\mathrm{lin}}(X)[i,:]-V(X)[j_i^\star,:]
    \right\|_2
    \leq
    \frac{1}{1+\rho}
    \max_{j\neq j_i^\star}
    \|V(X)[j,:]-V(X)[j_i^\star,:]\|_2.
    \label{eq:app_linear_term_A}
\end{equation}
For every $j\neq j_i^\star$, adding and subtracting nominal values gives
\begin{equation}
\begin{aligned}
    \|V(X)[j,:]-V(X)[j_i^\star,:]\|_2
    &\leq
    \|V_0[j,:]-V_0[j_i^\star,:]\|_2 \\
    &\quad+
    \|V(X)[j,:]-V_0[j,:]\|_2
    +
    \|V(X)[j_i^\star,:]-V_0[j_i^\star,:]\|_2 \\
    &\leq
    D_V(X_0)
    +
    2\epsilon L_V.
\end{aligned}
\label{eq:app_linear_value_spread_perturbed}
\end{equation}
Thus the first term in \eqref{eq:app_linear_three_terms} is bounded by $\frac{D_V(X_0)+2\epsilon L_V}{(1+\rho)}$. The middle term is bounded by $\epsilon L_V$ using \eqref{eq:app_value_drift}. The last term is bounded by $\frac{D_V(X_0)}{(1+\rho)}$ by Lemma~\ref{lem:linear_dominant_closeness} applied at $X_0$. Summing the three bounds gives
\begin{equation}
    \left\|
    \mathrm{Attn}_{\mathrm{lin}}(X)[i,:]-
    \mathrm{Attn}_{\mathrm{lin}}(X_0)[i,:]
    \right\|_2
    \leq
    \frac{2}{1+\rho}D_V(X_0)
    +
    \left(
        1+\frac{2}{1+\rho}
    \right)
    \epsilon L_V.
\end{equation}
Taking the maximum over rows gives the $\|\cdot\|_{2,\infty}$ bound.
\end{proof}

\paragraph{Part 3: Robustness from output drift.}
The previous proposition bounds the drift of the linear-attention output. The final step is the same argument used in the fixed-ordering softmax constructor proof, i.e. if the downstream head cannot move the logits enough to close the nominal margin, then the predicted class is fixed.

\begin{proposition}[Dominant-key linear attention robustness certificate]
\label{prop:linear_attention_robustness}
Let $f=h\circ\mathrm{Attn}_{\mathrm{lin}}$, and suppose each logit of $h$ is $L_h$-Lipschitz with respect to the Frobenius norm. Define
\begin{equation}
    \Delta_{\mathrm{lin}}
    =
    \frac{2}{1+\rho}D_V(X_0)
    +
    \left(
        1+\frac{2}{1+\rho}
    \right)
    \epsilon L_V.
    \label{eq:app_linear_delta}
\end{equation}
If $\mu(X_0)>2L_h\sqrt n\,\Delta_{\mathrm{lin}}$, then $f$ is robust on $\mathcal B_\epsilon(X_0)$.
\end{proposition}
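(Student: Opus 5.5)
The plan mirrors the proof of the fixed-pattern attention robustness proposition, with Proposition~\ref{prop:linear_attention_perturbation} taking the place of the softmax output bound. Fix $X\in\mathcal B_\epsilon(X_0)$ and write $\Delta A:=\mathrm{Attn}_{\mathrm{lin}}(X)-\mathrm{Attn}_{\mathrm{lin}}(X_0)$. By Proposition~\ref{prop:linear_attention_perturbation}, which applies because the dominance condition is assumed throughout the perturbation set, we have $\|\Delta A\|_{2,\infty}\leq\Delta_{\mathrm{lin}}$. The first step converts this row-wise bound to the Frobenius norm. Summing the squared row norms over the $n$ rows gives $\|\Delta A\|_F\leq\sqrt n\,\|\Delta A\|_{2,\infty}\leq\sqrt n\,\Delta_{\mathrm{lin}}$.

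The second step transfers this bound to the logits. Each logit $h_r$ is $L_h$-Lipschitz in Frobenius norm, so for every class $r$,
\[
|f_r(X)-f_r(X_0)|=\bigl|h_r(\mathrm{Attn}_{\mathrm{lin}}(X))-h_r(\mathrm{Attn}_{\mathrm{lin}}(X_0))\bigr|\leq L_h\sqrt n\,\Delta_{\mathrm{lin}}.
\]

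The third step bounds the margin. For any $k\neq y$, applying the triangle inequality to the $y$ and $k$ logits gives
\[
f_y(X)-f_k(X)\geq f_y(X_0)-f_k(X_0)-2L_h\sqrt n\,\Delta_{\mathrm{lin}}\geq\mu(X_0)-2L_h\sqrt n\,\Delta_{\mathrm{lin}}>0,
\]
where the last inequality is the hypothesis. Taking the minimum over $k\neq y$ gives $\mu(X)>0$ for all $X\in\mathcal B_\epsilon(X_0)$, which is robustness by the definition in Section~\ref{sec:robust_constructions}.

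I do not expect a real obstacle, since all the analytic work is contained in Proposition~\ref{prop:linear_attention_perturbation}. The only point to check is that the $\sqrt n$ factor is accounted for exactly once, in the $\|\cdot\|_{2,\infty}$-to-Frobenius conversion. One should also note that $\Delta_{\mathrm{lin}}$ does not depend on $X$, so the bound is uniform over the ball.
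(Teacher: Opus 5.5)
Your proposal is correct and matches the paper's proof step for step. You invoke Proposition~\ref{prop:linear_attention_perturbation} for the $\|\cdot\|_{2,\infty}$ bound, convert to Frobenius norm with the factor $\sqrt n$, apply the $L_h$-Lipschitz property to each logit, and close with the triangle inequality on $f_y-f_k$. As in the paper, the argument depends on the dominance condition \eqref{eq:linear_attention_dominance} holding throughout $\mathcal B_\epsilon(X_0)$; you were right to flag this as an implicit assumption, since the statement itself does not include it.
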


\begin{proof}
By Proposition~\ref{prop:linear_attention_perturbation}, $\|\mathrm{Attn}_{\mathrm{lin}}(X)-\mathrm{Attn}_{\mathrm{lin}}(X_0)\|_{2,\infty}\leq\Delta_{\mathrm{lin}}$. Therefore,
\begin{equation}
    \|\mathrm{Attn}_{\mathrm{lin}}(X)-\mathrm{Attn}_{\mathrm{lin}}(X_0)\|_F
    \leq
    \sqrt n\,\Delta_{\mathrm{lin}}.
\end{equation}
Since each logit of $h$ is $L_h$-Lipschitz, for every class $r$,
\begin{equation}
    |f_r(X)-f_r(X_0)|
    \leq
    L_h\sqrt n\,\Delta_{\mathrm{lin}}.
\end{equation}
Therefore, for any $k\neq y$,
\begin{equation}
\begin{aligned}
    f_y(X)-f_k(X)
    &\geq
    f_y(X_0)-f_k(X_0)
    -
    |f_y(X)-f_y(X_0)|
    -
    |f_k(X)-f_k(X_0)| \\
    &\geq
    \mu(X_0)-2L_h\sqrt n\,\Delta_{\mathrm{lin}}
    >
    0.
\end{aligned}
\end{equation}
Thus the predicted class remains $y$ for every $X\in\mathcal B_\epsilon(X_0)$.
\end{proof}

\subsection{Polynomial Network Constructor}\label{app:pnn}

\paragraph{Construction and robustness certificate}\label{app:pnn-construction}

We describe the polynomial network constructor in more detail. The construction applies to binary classifiers whose margin is a polynomial function of the input. Let $f:\mathbb R^n\to \mathbb R^2$ be a binary classifier, and write
$\mu(x)=f_1(x)-f_2(x)$ for its margin polynomial. The decision boundary is the real algebraic hypersurface
$D=\{x\in\mathbb R^n:\mu(x)=0\}$. A point $x_0$ is robust at radius $\epsilon$ with respect to a norm $\|\cdot\|$ if $\mu$ has constant sign on $B_\epsilon(x_0)=\{x:\|x-x_0\|\leq \epsilon\}$. Equivalently, $B_\epsilon(x_0)\cap D=\emptyset$.

The constructor proceeds in three steps. First, it samples a smooth point $p\in D$, so that $\nabla \mu(p)\neq 0$. Second, it chooses a norm-adapted normal direction $u(p)$ pointing to one side of the decision boundary. More precisely, $u(p)$ is normalized in the chosen norm and satisfies $\nabla \mu(p)\cdot u(p)>0$. Third, for a target verification radius $\epsilon>0$ and buffer $\delta>0$, it sets $x_0=p+(\epsilon+\delta)u(p)$. The sampled point $p$ is then a known boundary point at distance $\epsilon+\delta$ from $x_0$, and hence lies just outside the perturbation set $B_\epsilon(x_0)$.

The preceding step is only a local construction. It guarantees that one known point of the decision boundary lies outside the verification set, but it does not rule out the possibility that another branch or component of $D$ enters $B_\epsilon(x_0)$. The robustness label is therefore obtained from the following global separation certificate.

\begin{proposition}
\label{prop:pnn-global-certificate}
If $B_\epsilon(x_0)\cap D=\emptyset$, then $x_0$ is robust at radius $\epsilon$.
\end{proposition}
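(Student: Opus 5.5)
The plan is a connectedness argument combined with the intermediate value theorem. The ball $B_\epsilon(x_0)=\{x:\|x-x_0\|\leq\epsilon\}$ is convex for any norm, hence path-connected. The margin $\mu=f_1-f_2$ is a polynomial, so it is continuous on $\mathbb R^n$.

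The argument proceeds in three steps.
\begin{enumerate}[leftmargin=*]
\item The hypothesis $B_\epsilon(x_0)\cap D=\emptyset$ means $\mu(x)\neq 0$ for every $x\in B_\epsilon(x_0)$.
\item Suppose, for contradiction, that there are $x,x'\in B_\epsilon(x_0)$ with $\mu(x)>0>\mu(x')$. Consider the segment $\gamma(t)=(1-t)x+tx'$ for $t\in[0,1]$. By convexity it lies entirely in $B_\epsilon(x_0)$.
\item The scalar function $t\mapsto\mu(\gamma(t))$ is continuous and changes sign on $[0,1]$. By the intermediate value theorem it vanishes at some $t^\star\in(0,1)$. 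Then $\gamma(t^\star)\in B_\epsilon(x_0)\cap D$, which contradicts the hypothesis.
\end{enumerate}

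Hence $\mu$ has constant sign on $B_\epsilon(x_0)$. In particular, its sign everywhere on the ball agrees with $\mu(x_0)$, which is nonzero because $x_0\in B_\epsilon(x_0)$. So the classifier assigns every point of the ball the same label as $x_0$. This is exactly the definition of robustness at radius $\epsilon$ given above.

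There is no real obstacle here. The only points to check are that the ball is convex, which holds for any norm by the triangle inequality and homogeneity, and that $\mu$ is continuous, which holds because it is polynomial. The genuinely hard work is certifying the hypothesis $B_\epsilon(x_0)\cap D=\emptyset$ itself, for instance via a nearest-boundary computation, but that is outside the scope of this statement.
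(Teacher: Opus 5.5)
Your proof is correct and follows essentially the same route as the paper, which argues that $\mu(B_\epsilon(x_0))$ is a connected subset of $\mathbb R$ avoiding $0$ because the ball is connected and $\mu$ is continuous. Your version makes that connectedness step explicit through convexity of the ball and the intermediate value theorem along a segment.
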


\begin{proof}
Since $B_\epsilon(x_0)$ is connected and $\mu$ is continuous, the image $\mu(B_\epsilon(x_0))$ is connected in $\mathbb R$. If $B_\epsilon(x_0)\cap D=\emptyset$, then $\mu(x)\neq 0$ for every $x\in B_\epsilon(x_0)$. Hence $\mu$ cannot change sign on $B_\epsilon(x_0)$. Therefore the classifier assigns the same label to every point in $B_\epsilon(x_0)$, so $x_0$ is robust at radius~$\epsilon$.
\end{proof}

Thus the constructor separates the task of generating near-boundary candidates from the task of certifying their ground-truth labels. The perturbation step places $x_0$ close to the algebraic decision boundary with a prescribed buffer $\delta$, while the global separation check certifies that the full perturbation set remains on one side of $D$. Decreasing $\delta$ produces instances that are robust but closer to failure, and therefore harder for verifiers to certify.

\paragraph{Boundary sampling and shallow polynomial networks}
\label{app:pnn-sampling}

We now describe how boundary points are sampled and how the construction is instantiated for the shallow polynomial networks used in the experiments. Since $\mu$ is a polynomial, points on $D=\{\mu=0\}$ can be obtained by slicing $D$ with affine linear spaces. In our setting $D$ is a hypersurface, so a generic affine line is enough. We choose a random point $a\in\mathbb R^n$ and a random direction $v\in\mathbb R^n$, and restrict $\mu$ to the line $\ell(t)=a+tv$. This gives the univariate polynomial $\mu_\ell(t)=\mu(a+tv)$. Each real root of $\mu_\ell$ gives a boundary point $p=\ell(t)\in D$. For a generic line, the intersection is transverse away from the singular locus of $D$, so sampled points are smooth unless they also satisfy $\nabla\mu(p)=0$. We discard such points.

In the experiments, we use one-hidden-layer polynomial networks with architecture $(n,h,2)$,
\[
    f(x)=W_2(W_1x+b_1)^d+b_2,
\]
where the power is applied entrywise, $W_1\in\mathbb R^{h\times n}$, $b_1\in\mathbb R^h$, $W_2\in\mathbb R^{2\times h}$, and $b_2\in\mathbb R^2$. Writing $W_{2,1}$ and $W_{2,2}$ for the two rows of $W_2$, the margin polynomial is
\[
    \mu(x)
    =
    (W_{2,1}-W_{2,2})\cdot (W_1x+b_1)^d
    +
    (b_{2,1}-b_{2,2}).
\]
Thus the decision boundary $D=\{\mu=0\}$ is a real algebraic hypersurface of degree at most $d$.

For the $\ell_\infty$ perturbation sets used in our experiments, we take the norm-adapted normal direction to be
$u(p)=\operatorname{sign}(\nabla\mu(p))$. This direction has $\|u(p)\|_\infty=1$ and satisfies
$\nabla\mu(p)\cdot u(p)=\|\nabla\mu(p)\|_1>0$ at every smooth boundary point. We fix a push distance $r>0$, set $x_0=p+r u(p)$, and verify at radius $\epsilon=r-\delta$. Equivalently, $r=\epsilon+\delta$. The sampled boundary point $p$ is then at $\ell_\infty$ distance $r$ from $x_0$, while the verification box $B_\epsilon(x_0)$ has buffer $\delta$ from this known boundary point.

\paragraph{Numerical separation check and verifier protocol}
\label{app:pnn-numerical}

The global condition $B_\epsilon(x_0)\cap D=\emptyset$ is the mathematical certificate for robustness. In the present experiments, we implement this check numerically before passing instances to the verifier. For each candidate center $x_0$, we search for an intersection between the verification box and the decision boundary by solving
$\min_{z\in B_\epsilon(x_0)} \mu(z)^2 .$
The minimum is zero if and only if the box intersects $D=\{\mu=0\}$. We solve this box-constrained problem using L-BFGS-B from $50$ random initializations in $B_\epsilon(x_0)$, with a maximum of $500$ iterations per restart. If the smallest value found is below $10^{-6}$, we treat the candidate as numerically intersecting the decision boundary and discard it. Otherwise, we keep the candidate as passing the numerical separation check.

For each retained instance, we export the network to ONNX and generate a VNNLIB specification for the $\ell_\infty$ box $B_\epsilon(x_0)$. The input constraints are $x_{0,i}-\epsilon\leq x_i\leq x_{0,i}+\epsilon$ for $i=1,\ldots,n$. The output constraint requires the predicted label at $x_0$ to remain unchanged throughout the box. Equivalently, if $\mu(x_0)>0$, the verifier is asked to prove $\mu(x)>0$ for all $x\in B_\epsilon(x_0)$, and if $\mu(x_0)<0$, it is asked to prove $\mu(x)<0$ for all $x\in B_\epsilon(x_0)$.

We record verifier outcomes as \emph{robust}, \emph{non-robust}, \emph{timeout}, or \emph{unknown}. A \emph{robust} result means that the verifier certified the output constraint over the full box. A \emph{non-robust} result means that it found a violation of the specification. A \emph{timeout} or \emph{unknown} result means that no certificate or counterexample was found within the time limit.

\paragraph{Ablation Study Results}
\label{app:pnn-results}

Table~\ref{tab:boundary-verification-results} summarizes an ablation study for the shallow polynomial network constructor using $\alpha,\beta$-CROWN. We vary one parameter at a time around a baseline configuration, keeping the push distance fixed at $r=0.02$ and setting the verification radius to $\epsilon=r-\delta$. Each row reports the number of retained instances submitted to $\alpha,\beta$-CROWN after the numerical separation check in Section~\ref{app:pnn-numerical}. A timeout is recorded when $\alpha,\beta$-CROWN does not return either a certificate or a counterexample within the time limit in the \emph{Limit} column.

The clearest transitions in this ablation occur when increasing the polynomial degree $d$ or decreasing the buffer $\delta$. This matches the geometry of the construction: $d$ controls the degree of the margin polynomial, and hence the algebraic complexity of the decision boundary. This is also consistent with the ED degree formulas in \cite{alexandr2026robustness}, which count complex critical points of the closest-boundary equations and involve powers of $d-1$. The buffer $\delta$ controls a complementary geometric difficulty: since $\epsilon=r-\delta$, smaller $\delta$ places the verification box closer to the sampled boundary point, leaving less separation for $\alpha,\beta$-CROWN to certify. In contrast, varying $n$ or $h$ changes the ambient dimension or the number of hidden units, but has a milder effect in this sweep.

\begin{table}[th]
\caption{Verification outcomes for retained near-boundary instances generated from shallow polynomial networks and evaluated with $\alpha,\beta$-CROWN. The push distance is fixed at $r=0.02$, and the verification radius is $\epsilon=r-\delta$. Mean time includes timeout runs; mean time without timeouts averages only certified runs.}

\setlength{\tabcolsep}{4pt}

\centering
\begin{tabular}{c c c c c c c c c c}
\toprule
$n$ & $d$ & $h$ & $\delta$ & Limit & Points & Robust & Timeouts & Mean time & Mean time (no T/O) \\
\midrule
50  & 10 & 100 & $5\cdot10^{-3}$ & 300s & 5 & 5 & 0 & 9.10s  & 9.10s \\
100 & 10 & 100 & $5\cdot10^{-3}$ & 300s & 5 & 5 & 0 & 7.88s  & 7.88s \\
200 & 10 & 100 & $5\cdot10^{-3}$ & 300s & 5 & 5 & 0 & 7.86s  & 7.86s \\
300 & 10 & 100 & $5\cdot10^{-3}$ & 300s & 5 & 5 & 0 & 8.42s  & 8.42s \\
\midrule
100 & 2  & 100 & $5\cdot10^{-3}$ & 300s & 5 & 5 & 0 & 8.32s  & 8.32s \\
100 & 4  & 100 & $5\cdot10^{-3}$ & 300s & 5 & 5 & 0 & 8.63s  & 8.63s \\
100 & 6  & 100 & $5\cdot10^{-3}$ & 300s & 5 & 5 & 0 & 8.36s  & 8.36s \\
100 & 8  & 100 & $5\cdot10^{-3}$ & 300s & 5 & 5 & 0 & 8.22s  & 8.22s \\
100 & 10 & 100 & $5\cdot10^{-3}$ & 300s & 5 & 5 & 0 & 7.88s  & 7.88s \\
100 & 12 & 100 & $5\cdot10^{-3}$ & 300s & 5 & 5 & 0 & 8.23s  & 8.23s \\
100 & 14 & 100 & $5\cdot10^{-3}$ & 300s & 5 & 5 & 0 & 8.22s  & 8.22s \\
100 & 16 & 100 & $5\cdot10^{-3}$ & 300s & 5 & 5 & 0 & 8.00s  & 8.00s \\
100 & 18 & 100 & $5\cdot10^{-3}$ & 300s & 5 & 4 & 1 & 68.43s & 8.53s \\
100 & 20 & 100 & $5\cdot10^{-3}$ & 300s & 5 & 3 & 2 & 129.13s & 9.50s \\
100 & 22 & 100 & $5\cdot10^{-3}$ & 300s & 5 & 2 & 3 & 187.51s & 8.85s \\
\midrule
100 & 10 & 50  & $5\cdot10^{-3}$ & 300s & 5 & 5 & 0 & 6.28s  & 6.28s \\
100 & 10 & 100 & $5\cdot10^{-3}$ & 300s & 5 & 5 & 0 & 7.88s  & 7.88s \\
100 & 10 & 200 & $5\cdot10^{-3}$ & 300s & 5 & 5 & 0 & 6.22s  & 6.22s \\
100 & 10 & 300 & $5\cdot10^{-3}$ & 300s & 5 & 5 & 0 & 6.12s  & 6.12s \\
100 & 10 & 500 & $5\cdot10^{-3}$ & 300s & 5 & 5 & 0 & 6.77s  & 6.77s \\
\midrule
100 & 10 & 100 & $10^{-3}$       & 300s & 5 & 4 & 1 & 66.72s & 6.58s \\
100 & 10 & 100 & $5\cdot10^{-3}$ & 300s & 5 & 5 & 0 & 7.88s  & 7.88s \\
100 & 10 & 100 & $10^{-4}$       & 300s & 5 & 1 & 4 & 248.96s & 7.94s \\
\bottomrule
\end{tabular}
\label{tab:boundary-verification-results}
\end{table}

Table~\ref{tab:VeriStress-boundary-verification-results} reports a complementary verifier-comparison experiment on selected polynomial-network instances from the VeriStress study. Unlike Table~\ref{tab:boundary-verification-results}, this table is not an ablation study for $\alpha,\beta$-CROWN over the full parameter sweep. Instead, each row records the outcome of running several verifiers on a retained near-boundary instance generated by the same constructor. The purpose is to show that the construction produces instances that can stress multiple verification back ends. Across these selected instances, $\alpha,\beta$-CROWN certifies many cases but begins to time out as $d$ increases or $\delta$ decreases, while NeuralSAT and PyRAT time out on more of the listed configurations.

\begin{table}[th]
\caption{Verifier outcomes for selected retained near-boundary instances used in the VeriStress study described in Section~\ref{sect:exp}. The push distance is fixed at $r=0.02$, the verification radius is $\epsilon=r-\delta$, and the timeout is $360$ seconds. Entries report certification time when the verifier certifies robustness and ``Timeout'' otherwise. Marabou and nnenum returned errors on these instances and are omitted.}
\centering
\begin{tabular}{c c c c c c c c c c}
\toprule
$n$ & $d$ & $h$ & $\delta$ 
& $\alpha,\beta$-CROWN outcome 
& NeuralSAT outcome 
& PyRAT outcome\\
\midrule
50  & 10 & 100 & $5\cdot10^{-3}$ & 9.81s & 39.08s & Timeout \\
100 & 2  & 100 & $5\cdot10^{-3}$ &  8.33s & Timeout & Timeout \\
100 & 6  & 100 & $5\cdot10^{-3}$ &  7.97s & 34.27s & Timeout \\
100 & 10 & 50  & $5\cdot10^{-3}$ &  7.79s & 37.74s & Timeout \\
100 & 10 & 500 & $5\cdot10^{-3}$ &  7.99s & 43.60s & Timeout \\
100 & 12 & 100 & $5\cdot10^{-3}$ &  7.61s & 37.17s & Timeout \\
100 & 14 & 100 & $5\cdot10^{-3}$ &  7.95s & 45.65s & Timeout \\
100 & 16 & 100 & $5\cdot10^{-3}$ &  8.09s & 38.13s & Timeout \\
100 & 17 & 100 & $5\cdot10^{-3}$ &  Timeout & Timeout & Timeout \\
100 & 18 & 100 & $5\cdot10^{-3}$ &  9.33s & 41.75s & Timeout \\
100 & 18 & 100 & $5\cdot10^{-3}$ &  9.89s & 42.29s & Timeout \\
100 & 19 & 100 & $5\cdot10^{-3}$ &  Timeout & Timeout & Timeout \\
100 & 19 & 100 & $5\cdot10^{-3}$ &  Timeout & Timeout & Timeout \\
100 & 20 & 100 & $5\cdot10^{-3}$ &  30.54s & 41.20s & Timeout \\
100 & 21 & 100 & $5\cdot10^{-3}$ &  Timeout & Timeout & Timeout \\
100 & 22 & 100 & $5\cdot10^{-3}$ &  34.43s & 44.97s & Timeout \\
100 & 10 & 100 & $2\cdot10^{-3}$ &  34.73s & Timeout & Timeout \\
100 & 10 & 100 & $1\cdot10^{-3}$ &  32.49s & Timeout & Timeout \\
100 & 10 & 100 & $5\cdot10^{-4}$ &  33.92s & Timeout & Timeout \\
100 & 10 & 100 & $2\cdot10^{-4}$ &  34.88s & Timeout & Timeout \\
100 & 10 & 100 & $1\cdot10^{-4}$ &  Timeout & Timeout & Timeout \\
100 & 18 & 100 & $1\cdot10^{-3}$ &  Timeout & Timeout & Timeout \\
\bottomrule
\end{tabular}
\label{tab:VeriStress-boundary-verification-results}
\end{table}

\section{Proof of Proposition~\ref{prop:beta_bound}}\label{app:Proofs}

Recall that
$L_c := \sup\limits_{x\in \mathcal{B}_\epsilon(x_0)}
\sup\limits_{\xi\in \partial_c \mu(x)} \|\xi\|_{p,*}$, where $\|\cdot\|_{p,*}$ is the dual norm of $\|\cdot\|_p$. In the smooth setting of Proposition~\ref{prop:beta_bound}, this reduces to
$L_c=\sup\limits_{x\in\mathcal{B}_\epsilon(x_0)}\|\nabla\mu(x)\|_{p,*}$. We assume the gradient is $\beta$-Lipschitz with respect to the same primal-dual norm pair, meaning
$\|\nabla\mu(x)-\nabla\mu(y)\|_{p,*}\leq \beta\|x-y\|_p$ for all
$x,y\in\mathcal{B}_\epsilon(x_0)$.

In the following proof, we cover the perturbation ball by smaller $\ell_p$ balls of radius $r$. On each small ball, the first-order Taylor approximation to $\mu$ is accurate up to order $\frac{\beta r^2}{2}$. Choosing $r=\sqrt{2\tau\epsilon L_c/\beta}$ makes this local approximation error at most $\tau\epsilon L_c$, and the number of required balls gives the stated affine-cover bound.

\begin{proof}
If $L_c=0$, then $\nabla\mu(x)=0$ for all $x\in\mathcal{B}_\epsilon(x_0)$, so $\mu$ is constant on the convex set $\mathcal{B}_\epsilon(x_0)$. Hence one affine function represents $\mu$ exactly, giving $N_{\mathrm{aff}}(\delta)=1$ and $A_\tau^\star=0$. The result is therefore immediate. 

Next, assume $L_c>0$ and let $\delta:=\tau\epsilon L_c$. If $\beta=0$, then $\nabla\mu$ is constant on $\mathcal{B}_\epsilon(x_0)$, so $\mu$ is affine on $\mathcal{B}_\epsilon(x_0)$. Again, $N_{\mathrm{aff}}(\delta)=1$ and $A_\tau^\star=0$. So, assume $\beta>0$ and set
\begin{equation}
    r:=
    \sqrt{\frac{2\delta}{\beta}}=
    \sqrt{\frac{2\tau\epsilon L_c}{\beta}} .
\end{equation}

Let $\mathcal{B}_p(z,r):=\{x:\|x-z\|_p\leq r\}$ be a closed $\ell_p$ ball. By the standard covering-number bound for an $\ell_p$ ball, there exist points $z_1,\dots,z_M\in\mathcal{B}_\epsilon(x_0)$ such that:
\begin{equation}
    \mathcal{B}_\epsilon(x_0)
    \subseteq
    \bigcup_{i=1}^M
    \left(
        \mathcal{B}_p(z_i,r)\cap \mathcal{B}_\epsilon(x_0)
    \right),
    \;\;\;\;\;
    M\leq\left(1+\frac{2\epsilon}{r}\right)^d .
\end{equation}
Define $U_i:=\mathcal{B}_p(z_i,r)\cap \mathcal{B}_\epsilon(x_0)$. Each $U_i$ is convex because it is the intersection of two convex sets. For each cell $U_i$, define the first-order affine approximation
\begin{equation}
    \ell_i(x):=\mu(z_i)+\nabla\mu(z_i)^\top(x-z_i).
\end{equation}
Fix any $x\in U_i$. By the fundamental theorem of calculus along the segment from $z_i$ to $x$, we have that:
\begin{align}
    \mu(x)-\mu(z_i) &= \int_0^1 \nabla\mu\bigl(z_i+t(x-z_i)\bigr)^\top (x-z_i)
    \,dt, \\
    \mu(x)-\ell_i(x) &= \int_0^1\left[
        \nabla\mu\bigl(z_i+t(x-z_i)\bigr)
        -
        \nabla\mu(z_i)\right]^\top(x-z_i)\,dt .
\end{align}
Using Holder duality and the $\beta$-Lipschitz property of the gradient:
\begin{align}
    |\mu(x)-\ell_i(x)| &\leq \int_0^1 \left\|
        \nabla\mu\bigl(z_i+t(x-z_i)\bigr) - \nabla\mu(z_i) \right\|_{p,*} \|x-z_i\|_p\,dt \\
    &\leq\int_0^1\beta t\|x-z_i\|_p^2\,dt \\
    &=\frac{\beta}{2}\|x-z_i\|_p^2 .
\end{align}
Since $x\in U_i\subseteq \mathcal{B}_p(z_i,r)$, we have $\|x-z_i\|_p\leq r$, and hence
$|\mu(x)-\ell_i(x)|\leq \frac{\beta}{2}r^2=\delta$. Thus each $U_i$ admits an affine approximation with error at most $\delta$, so:
\begin{equation}
    N_{\mathrm{aff}}(\delta)
    \leq
    M
    \leq
    \left(
        1+\frac{2\epsilon}{r}
    \right)^d
\end{equation}
Substituting the definition of $r$ gives:
\begin{equation}
    \frac{2\epsilon}{r}
    = 2\epsilon \sqrt{\frac{\beta}{2\tau\epsilon L_c}}
    =\sqrt{\frac{2\beta\epsilon}{\tau L_c}}
    = \sqrt{\frac{2\widetilde\beta}{\tau}},\qquad
    \widetilde\beta:=\frac{\beta\epsilon}{L_c}.
\end{equation}
Therefore:
\begin{align}
    N_{\mathrm{aff}}(\tau\epsilon L_c)
    \leq
    \left(
        1+\sqrt{\frac{2\widetilde\beta}{\tau}}
    \right)^d \\
    \implies A_\tau^\star
     =
    \log N_{\mathrm{aff}}(\tau\epsilon L_c)\leq d\log
    \left(
        1+\sqrt{\frac{2\widetilde\beta}{\tau}}
    \right)
\end{align}
\end{proof}

The proposition shows that in smooth networks, affine-cover complexity is controlled by the normalized gradient-variation scale $\widetilde\beta=\frac{\beta\epsilon}{L_c}$. In principle, one could estimate $\widetilde\beta$ directly and use the bound as a profile component. In practice, however, estimating a global gradient-Lipschitz constant over $\mathcal{B}_\epsilon(x_0)$ is often conservative, so Section~\ref{sect:profiles} instead uses the empirical statistic $A_\tau$, which samples and counts distinct local linearization classes. For piecewise-linear networks where a global smoothness constant may be infinite, $A_\tau$ is interpreted as an empirical count of distinct local affine behaviors encountered inside the perturbation set.

\section{Verifier Taxonomy by Certificate Type}\label{app:taxonomy}

Our characterization of instance difficulty is guided by the wide array of verification algorithm implementations, as the profile must apply to all verifiers. Verification algorithms differ not only in implementation but in the fundamental mathematical object they construct as evidence that $\min\limits_{x \in \mathcal{B}_\epsilon(x_0)} \mu(x) > 0$. We organize the verifier landscape by certificate type rather than by implementation details such as solver backend or bounding subroutine. We identify four certificate types:

Style D (Dual Relaxation). The verifier produces a dual certificate, i.e., a feasible solution to the dual of a convex relaxation of $\min\limits_{x \in \mathcal{B}} \mu(x)$. Verification terminates in a single pass and no branching or combinatorial search is performed. The quality of the certificate depends entirely on the tightness of the relaxation. Examples: IBP \cite{DBLP:journals/corr/abs-1810-12715}, CROWN \cite{zhang2018efficientneuralnetworkrobustness}, $\alpha$-CROWN \cite{
xu2020automatic, 
xu2021fastcompleteenablingcomplete} without branching.

Style P (Partition Search). The verifier produces a finite cover $\{U_i\}$ of $\mathcal{B}_\epsilon(x_0)$ together with a bound on $\mu$ over each $U_i$, jointly witnessing $\mu > 0$ on $\mathcal{B}_\epsilon(x_0)$. The cover is constructed by recursively splitting the domain and bounding $\mu$ on each piece via relaxation, pruning subproblems whose relaxation already certifies the property. The cost depends on the number of subproblems solved via pruning. Examples: $\alpha,\beta$-CROWN, Marabou's divide-and-conquer mode \cite{10.1007/978-3-030-25540-4_26}.

Style S (SAT Refutation). The verifier produces a refutation certificate, i.e. a proof that no assignment of  activation states is jointly consistent with the constraints implied by $\mu(x) \leq 0$. The search is often conducted by a DPLL(T)-style SAT engine \cite{10.1007/978-3-540-27813-9_14} over Boolean activation variables with a solver checking consistency of each partial assignment and learned conflict clauses pruning future candidates. The cost depends on how many partial assignments must be explored before the search space is exhausted. Examples: NeuralSAT \cite{duong2024dplltframeworkverifyingdeep}, Marabou's DPLL(T) engine. 

Style R (Reachability). The verifier produces a reachability certificate, i.e. an overapproximation of the output range $\{f(x) : x \in \mathcal{B}_\epsilon(x_0)\}$, obtained by propagating an abstract set representation of $\mathcal{B}_\epsilon(x_0)$ forward through $f$ layer by layer. Unlike other verifier styles, the certificate is typically constructed in the forward direction without explicit duality or domain decomposition, and its cost depends on how the abstract representation grows as it passes through the network $f$. Examples: nnenum (zonotopes with refinement) \cite{nnenum}, NNV (star sets) \cite{tran2020nnvneuralnetworkverification}.

Note that several verifiers combine multiple certificate types. For instance, $\alpha,\beta$-CROWN uses a dual relaxation certificate at each node of a partition search (styles D and P). Similarly, nnenum uses abstract propagation with domain splitting (styles P and R).

\section{Bug Discovery in Open-Source Verifier}\label{app:NeuralSAT_Bug}

During development of VeriStress-GT, we reported a bug in the open-source verifier [Verifier]\footnote{Identity of codebase has been redacted to preserve anonymity in the double blind review process.}. The issue arose in [Verifier]'s MIP presolve routine for specifications with disjunctive output constraints. In the verifier convention used here, SAT means that the adversarial constraints are feasible, i.e., a counterexample exists, while UNSAT means that no counterexample exists for the given specification.

The relevant class of specifications can be written as a disjunction$\Phi = \bigvee_{j=1}^{m} \Phi_j$. 
Correct handling of such a specification requires OR semantics, i.e. the verifier should return SAT if any disjunct $\Phi_j$ is feasible, should return UNSAT only if all disjuncts are infeasible, and should return UNKNOWN if no disjunct is shown feasible but at least one disjunct cannot be resolved. However, the observed behavior in the MIP presolve routine appeared to return the status of the final disjunct checked. As a result, if an earlier disjunct was feasible but the final disjunct was infeasible, the routine could incorrectly return UNSAT on a satisfiable adversarial specification.

This failure mode occurred only under a specific combination of conditions. The specification contained multiple disjuncts, at least one non-final disjunct was feasible, the final disjunct in iteration order was infeasible, and MIP presolve was enabled. The reported fix was to aggregate disjunct statuses explicitly. In other words, return SAT immediately upon finding a feasible disjunct and otherwise return UNKNOWN if any disjunct was unresolved, and return UNSAT only after all disjuncts have been proved infeasible.

This discovery illustrates one of the motivations for VeriStress-GT. Benchmarks whose labels are inferred from verifier consensus or from the absence of counterexamples can obscure implementation-level soundness failures. By contrast, VeriStress-GT instances have verifier-independent ground-truth robustness labels, allowing incorrect verifier outcomes to be detected directly rather than treated as ambiguous disagreements among tools. We appreciate the responsiveness of the developers of the [Verifier] package in quickly reviewing and approving our proposed fix for this issue. 

\section{Difficulty Profile Study Details}\label{app:comp_detail_results}

\subsection{Design Principles}\label{app:design}
Below we state the three major design principles used to guide the process of Difficulty Profile component selection:

(D1, Instance-level computability): Each component can be computed without access to the internals of a particular verifier. Thus the difficulty profile component characterizes the problem instance rather than implementation and efficiency details of a verifier.

(D2, Certificate Coverage): The components are selected to cover varying bottlenecks in the certification problem. These correspond to fundamentally different reasons a verifier may require branching or refinement and help ensure the Difficulty Profile is applicable to all verifier types (See Appendix~\ref{app:taxonomy} for a description of different verifier styles).

(D3, Non-Redundancy and Interpretability): No two components intend to capture the same phenomenon. Otherwise, one may saturate the pool of possible components with many statistics recorded during experimentation, reducing interpretability and leading to a bloated set of components. 

\subsection{Component Search}\label{app:search}
Lastly, we note that many difficulty profile components were tested before decided on the canonical set enumerated in Section \ref{sect:profiles}. Table~\ref{tab:discarded_difficulty_components} below details many of the component ideas that were tested but ultimately not included, typically due to limited correlations with runtime or timeout predictions.

\begin{table}[h]
\caption{Difficulty Profile component ideas tested and corresponding explanations as to why they did not appear in the canonical profile defined in Section~\ref{sect:profiles}.}
\centering
\scriptsize
\setlength{\tabcolsep}{4pt}
\renewcommand{\arraystretch}{1.15}
\begin{tabularx}{\textwidth}{
@{}
>{\raggedright\arraybackslash}p{0.22\textwidth}
>{\raggedright\arraybackslash}p{0.25\textwidth}
>{\raggedright\arraybackslash}X
@{}}
\hline
\textbf{Component Idea} & \textbf{Intended Signal} & \textbf{Reason Discarded}\\
\hline\hline

Empirical mean margin
& Average robustness slack over sampled perturbations.
& Discarded in favor of $\widehat M_{\min}$, which better matches verification failure: a single low-margin region can dominate robustness difficulty even when the average margin is large. \\
\hline

IBP absolute gap
& Raw looseness of interval propagation.
& Folded into $G_{\mathrm{IBP}}$. The relative version was more comparable across constructions because raw gaps are strongly affected by logit scale. \\
\hline

IBP lower margin
& Whether plain IBP certifies the instance.
& Used as a diagnostic, but discarded as a profile component because $G_{\mathrm{IBP}}$ captures relaxation looseness more directly and is measured via a continuous scale \\
\hline

Unstable count
& Number of ambiguous nonlinear units.
& Folded into $U$. The fraction normalizes for network size \\
\hline

Total nonlinear units
& Raw nonlinear network size.
& Discarded because it is mostly a size statistic. $U$ and $A_\tau$ better capture whether nonlinearities are actually difficult inside the perturbation set. \\
\hline

Raw local pattern count
& Number of observed local activation patterns.
& Folded into $A_\tau$. The tolerance-based/log-scaled version was more stable \\
\hline

Gradient dimension maximum
& Worst sampled gradient-dimensionality estimate.
& Discarded in favor of $d_{\mathrm{eff}}$ mean/aggregate, since the maximum was noisier and more sensitive to outlier samples. \\
\hline

Gradient norm
& Local first-order sensitivity scale.
& Discarded due to scale dependence and limited standalone intuition. Margin scale and relaxation looseness were better captured by $\widehat M_{\min}$ and $G_{\mathrm{IBP}}$. \\
\hline

Gradient sensitivity
& Variation of gradients across the perturbation set.
& Discarded because it was noisier than the canonical components. Its useful signal was mostly absorbed by $A_\tau$ and $d_{\mathrm{eff}}$. \\
\hline

Gradient concentration
& Whether sensitivity lies in few coordinates.
& Folded into $d_{\mathrm{eff}}$, which gives a more direct and normalized measure of effective sensitivity dimension. \\
\hline

Relative linearization error
& Deviation from first-order local behavior.
& Discarded due to weaker and less consistent correlation. $A_\tau$ gave a cleaner proxy for local piecewise-linear complexity. \\
\hline

Absolute linearization error
& Raw first-order approximation error.
& Discarded because it is scale-sensitive. The relative version was preferable, but ultimately $A_\tau$ had clearer intuition. \\
\hline

First-order margin ratio
& Margin compared to local linear drift.
& Discarded because it mixed multiple effects. $\widehat M_{\min}$ and $d_{\mathrm{eff}}$ separated margin tightness from directional complexity more cleanly. \\
\hline

Lipschitz-margin ratio
& Worst-case sensitivity relative to margin.
& Discarded because global/local Lipschitz estimates were loose and scale-dependent. $\widehat M_{\min}$ and $G_{\mathrm{IBP}}$ were more directly tied to verification outcomes. \\
\hline

Estimated local Lipschitz
& Sampled local sensitivity bound.
& Discarded due to estimator noise and limited added value beyond gradient- and margin-based canonical components. \\
\hline

Curvature proxy
& Smooth nonlinear bending of the margin.
& Discarded because estimates were noisy and less natural for ReLU networks. $A_\tau$ was a better architecture-agnostic nonlinearity proxy. \\
\hline

Parameter count
& Raw model size.
& Does not distinguish easy large networks from hard small ones. \\
\hline

Layer count
& Raw network depth.
& Discarded as too architecture-dependent. Depth can create difficulty, but only through effects better captured by $U$, $A_\tau$, $G_{\mathrm{IBP}}$, or $d_{\mathrm{eff}}$. \\
\hline

Instability-region interaction
& Many unstable units across many local regions.
& Discarded because it was less interpretable than reporting $U$ and $A_\tau$ separately, with limited evidence of consistent added correlation. \\
\hline

Looseness-instability interaction
& Loose relaxations plus many ambiguous units.
& Discarded because it was mostly redundant with the two canonical components and harder to explain cleanly. \\
\hline

Looseness-region interaction
& Loose relaxations across many regions.
& Discarded because it added complexity without enough interpretability gain over $G_{\mathrm{IBP}}$ and $A_\tau$. \\
\hline

Margin-looseness interaction
& Small margin plus loose bounds.
& Discarded because it was highly sensitive when $\widehat M_{\min}$ was small. Reporting $\widehat M_{\min}$ and $G_{\mathrm{IBP}}$ separately was more stable. \\
\hline

\end{tabularx}
\label{tab:discarded_difficulty_components}
\end{table}

\section{Numerical Study Details}\label{app:detail_results}
Table~\ref{tab:verifier_outcomes} displays the aggregated VeriStress-GT stress-test study results, while Table~\ref{tab:hyperparams} details what hyperparameter ranges were used in each construction for the instantiation of the VeriStress-GT framework. To see the specific configuration for all $225$ instances for the stress-test, see ``veristress.yaml'' and ``polynomial$\_$stress$\_22.$yaml'' in the configs folder in our public code repository: \url{https://github.com/dtroxell19/VeriStressGT.git}

\begin{table}[ht]
\centering
\begin{tabular}{lrrrrr}\label{tab:verifier_outcomes}
\textbf{Verifier} & \textbf{UNSAT} & \textbf{SAT} & \textbf{Timeout/Unknown} & \textbf{Error/Unsupported} & \textbf{Total} \\
\hline
$\alpha,\beta$-CROWN   & 193 & 0 & 32 & 0  & 225 \\
Marabou  & 122 & 5 & 41 & 57 & 225 \\
NeuralSAT & 188 & 2 & 23 & 12  & 225 \\
PyRAT     & 160 & 0 & 25 & 40 & 225 \\
nnenum     & 102 & 1 & 45 & 77 & 225 \\
\hline
\end{tabular}
\caption{Verification outcomes from Section~\ref{sect:exp}, aggregated across constructors.}
\label{tab:verifier_outcomes}
\end{table}

\begin{table}[ht]
\caption{Constructor hyperparameter ranges for the VeriStress-GT instantiation described in Section~\ref{sect:exp}. The fixed column lists compact constructor-level constants, while the varied column reports the hyperparameters swept across instances. Note that $d$ represents input dimension and $C$ represents number of output classes.}
\centering
\scriptsize
\setlength{\tabcolsep}{3pt}
\renewcommand{\arraystretch}{1.18}
\begin{tabularx}{\textwidth}{
@{}
>{\raggedright\arraybackslash}p{0.145\textwidth}
r
>{\raggedright\arraybackslash}p{0.095\textwidth}
>{\raggedright\arraybackslash}p{0.315\textwidth}
>{\raggedright\arraybackslash}X
@{}}
\hline
Constructor & \# & Fixed & Varied Hyperparameters & Hyperparameter Descriptions \\
\hline

MEAP
& 14
& $d=100$, $C=10$
& $P\in\{2,16,32,128\}$; $\varepsilon\in\{0.05,0.1,0.25,0.5\}$; $\gamma\in\{10^{-5},10^{-4},10^{-3},1,100\}$
& $P$: ReLU pair count; $\varepsilon$: perturbation radius; $\gamma$: certified margin buffer. \\
\hline

MILP Exact-Radius
& 31
& $d=50$, $C=5$
& $h_1,h_2\in\{5,10,20,50,100\}$; $\varepsilon_{\mathrm{frac}}\in\{0.5,0.9,0.99,0.999,0.9999,0.99999\}$
& $h_1,h_2$: hidden-layer widths; $\varepsilon_{\mathrm{frac}}$: 
Exact radius multiplier. \\
\hline

Input-Corner Stress
& 22
& $d=100$, $C=10$, $\varepsilon=0.2$
& $\texttt{hinge\_l1}\in\{10^{-4},10^{-2},1,10^2,10^4,10^6\}$; $\texttt{num\_hinges}\in\{16,256,1024,4096\}$
& $\texttt{hinge\_l1}$: hinge slope scale; $\texttt{num\_hinges}$: convex hinge count. \\
\hline

Constant-on-Box Embedding
& 5
& $d=100$, $C=10$, $\varepsilon=0.1$
& $\texttt{wide-layers}\in\{1,5,10,20,50\}$
& $\texttt{wide-layers}$: downstream network depth. \\
\hline

Deep Contractive CNN
& 50
& $\varepsilon=0.02$, $C=10$, $k=3$
& $\texttt{depth}\in\{2,4,6,8,10\}$; $\texttt{channels}\in\{4,8,16,32,64\}$; $\lambda\in\{0.70,0.80,0.90,0.95,0.98\}$; $\texttt{margin}\in\{10^{-4},5\cdot10^{-4},10^{-3},10^{-2},10^{-1}\}$; $\texttt{bias\_instability\_frac}\in\{0.10,0.25,0.40,0.55,0.70\}$
& $\texttt{depth}$: contractive block count; $\texttt{channels}$: feature-map width; $\lambda$: per-layer contraction; $\texttt{margin}$: certified slack floor; $\texttt{bias\_instability\_frac}$: instability placement target. \\
\hline

Paired-Biases CNN
& 46
& $\varepsilon=0.05$, $C=10$, $k=3$
& $P\in\{2,4,8,16,32\}$; $\texttt{margin}\in\{10^{-3},0.05,0.1,0.5,1.0\}$; $\delta\in\{0.005,0.01,0.025,0.05,0.1\}$; $\texttt{num\_backbone\_layers}\in\{1,2,3,4\}$; $H,W\in\{4,8,12,16\}$
& $P$: paired-channel count; $\texttt{margin}$: global logit offset; $\delta$: bias gap half-width; $\texttt{num\_backbone\_layers}$: upstream CNN depth; $H,W$: spatial feature size. \\
\hline

Dominant-Key (Linear) Attention
& 18
& $C=10$, $d_k=n$
& $n\in\{2,4,6\}$; $d\in\{2,4,6,8,12\}$; $d_v\in\{2,4,8\}$; $\varepsilon\in\{0.02,0.05,0.08\}$; $\texttt{margin\_factor}\in\{1.1,1.5,2.0,3.0\}$; $(\texttt{gate\_scale},\texttt{noise\_scale})\in\{(0.30,0.10),(0.50,0.20),(0.80,0.30)\}$
& $n$: sequence length; $d$: token dimension; $d_v$: value dimension; $\varepsilon$: perturbation radius; $\texttt{margin\_factor}$: certificate headroom; $\texttt{gate\_scale}$: dominant-key strength; $\texttt{noise\_scale}$: off-key suppression. \\
\hline

Fixed-Ordering Attention
& 17
& $d_v=8$, $C=50$
& $n\in\{2,8,16,32\}$; $d\in\{4,8,16\}$; $\alpha\in\{2.0,5.0,10.0\}$; $\varepsilon\in\{5\cdot10^{-4},10^{-3},10^{-2}\}$; $\texttt{margin\_slack}\in\{1.00001,1.001,1.05,1.1,2.0,5.0\}$
& $n$: sequence length; $d$: token dimension; $\alpha$: attention score scale; $\varepsilon$: perturbation radius; $\texttt{margin\_slack}$: certificate headroom. \\
\hline
Polynomial Network
& 22
& C = 2
& $d\in\{50,100\}$; $\texttt{degree}\in [2,22]$; $h\in\{50,100,500\}$; $\delta\in[1\times10^{-4}, 5\times10^{-3}]$
& $d$: input dimension; $\texttt{degree}$: polynomial degree; $h$: hidden width; $\delta$: boundary buffer. \\
\hline

\end{tabularx}
\label{tab:hyperparams}
\end{table}

\begin{figure}[ht]
    \centering
    \includegraphics[width=.8\linewidth]{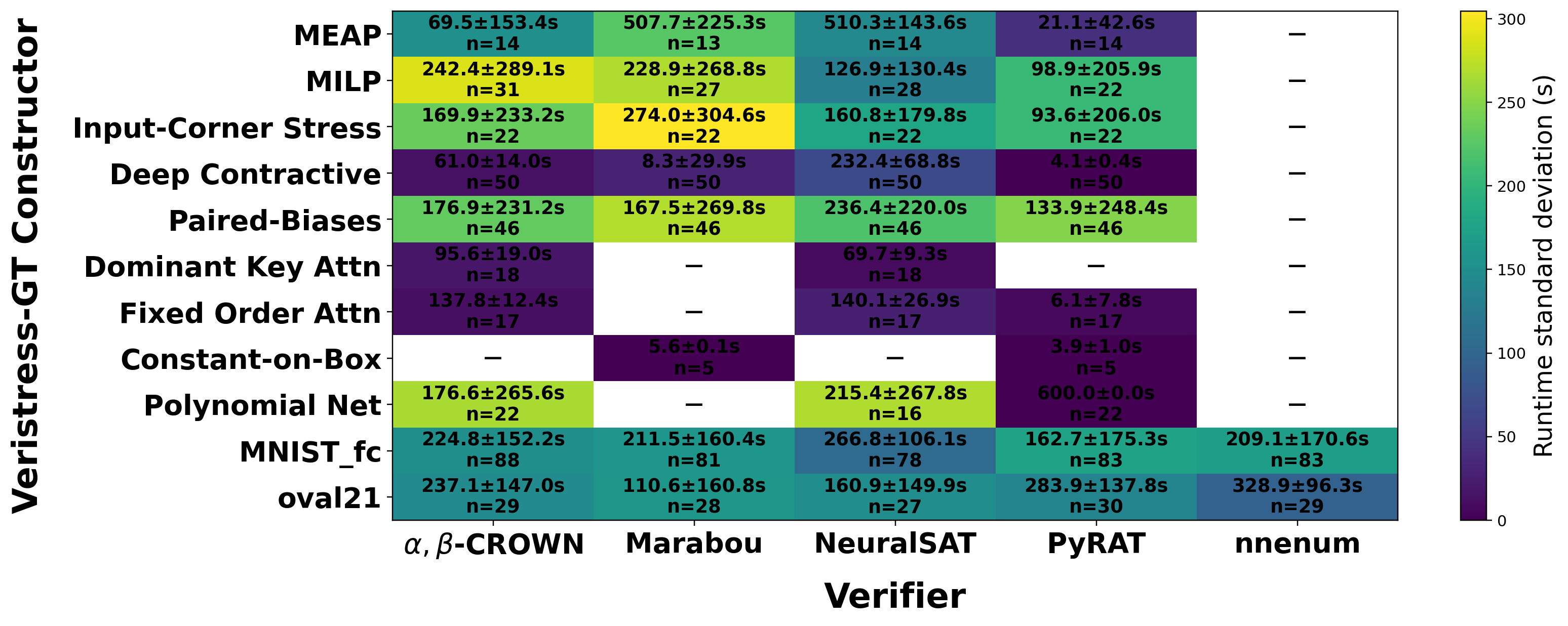}
    \caption{Runtime variability across instances for each constructor/benchmark and verifier. Each cell reports empirical mean $\pm$ standard deviation of runtime in seconds, computed over verified instances and timeouts using the same timing convention as Figure~\ref{fig:veriStress_results}. Verified instances use measured wall-clock time, while timeouts are assigned the benchmark timeout value. Error, unsupported, and unknown outcomes are excluded from the runtime-spread estimate. A cell is left blank if the standard deviation for the runtime was not applicable or $0$.
}
    \label{fig:var}
\end{figure}

Additionally, we also include trajectory plots showing Difficulty Profile component estimates and corresponding verifier outcomes for each instance in the external MNIST$\_$fc and oval$21$ benchmarks in Figures \ref{fig:mnist_traj} and \ref{fig:oval_traj}. To access the full csv files containing the difficulty profile estimates and corresponding outcomes (including those for the $225$ tested VeriStress-GT framework) please see the Results folder in our public code repository: \url{https://github.com/dtroxell19/VeriStressGT.git}

\begin{figure}[ht]
    \centering
    \includegraphics[width=.75\linewidth]{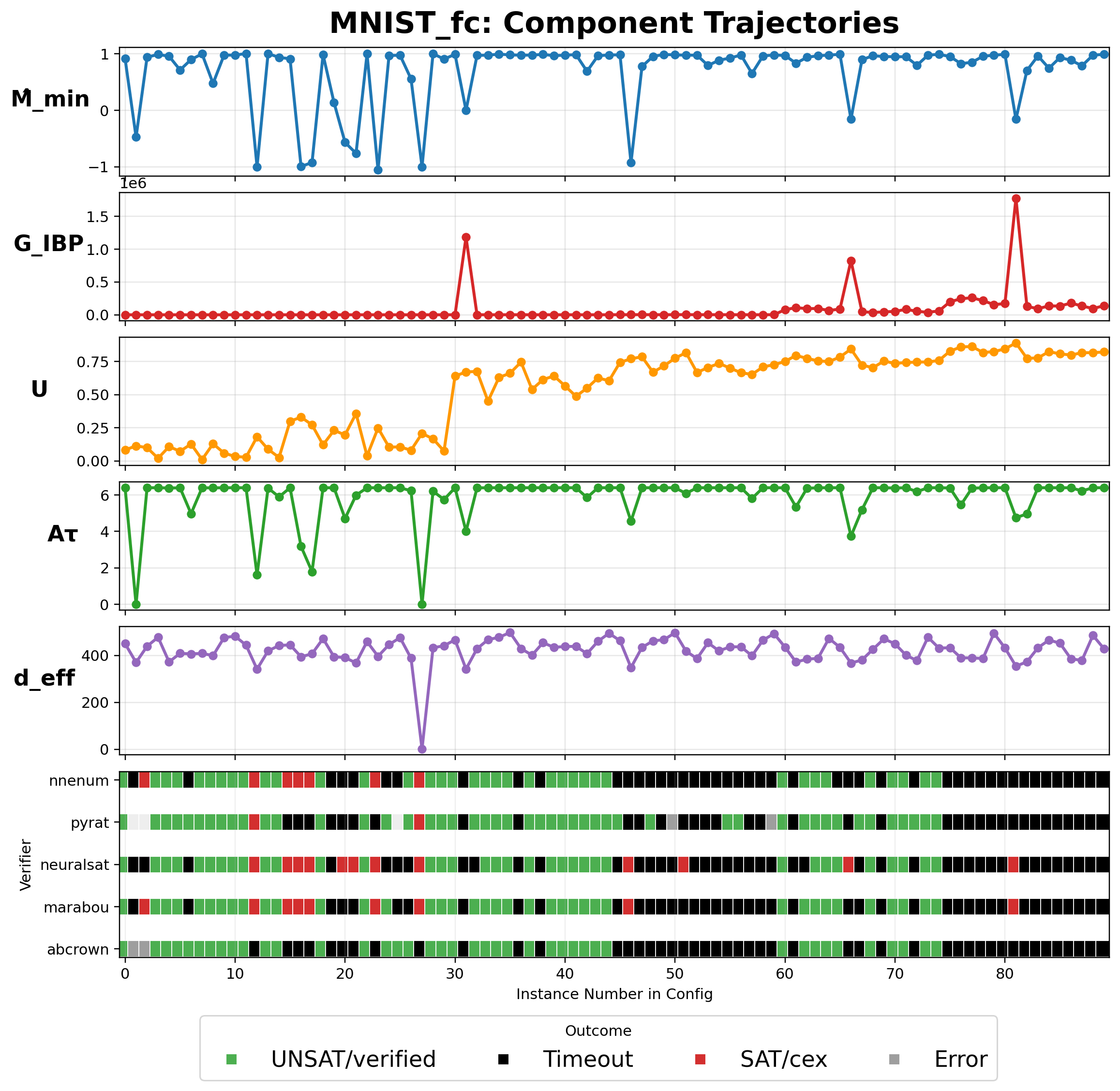}
    \caption{Difficulty profile component estimates and verifier outcomes for each of the $90$ instances in the MNIST$\_$fc VNNCOMP benchmark. Each instance had a timeout limit of $360$ seconds per~verifier.}
    \label{fig:mnist_traj}
\end{figure}

\begin{figure}[ht]
    \centering
    \includegraphics[width=.75\linewidth]{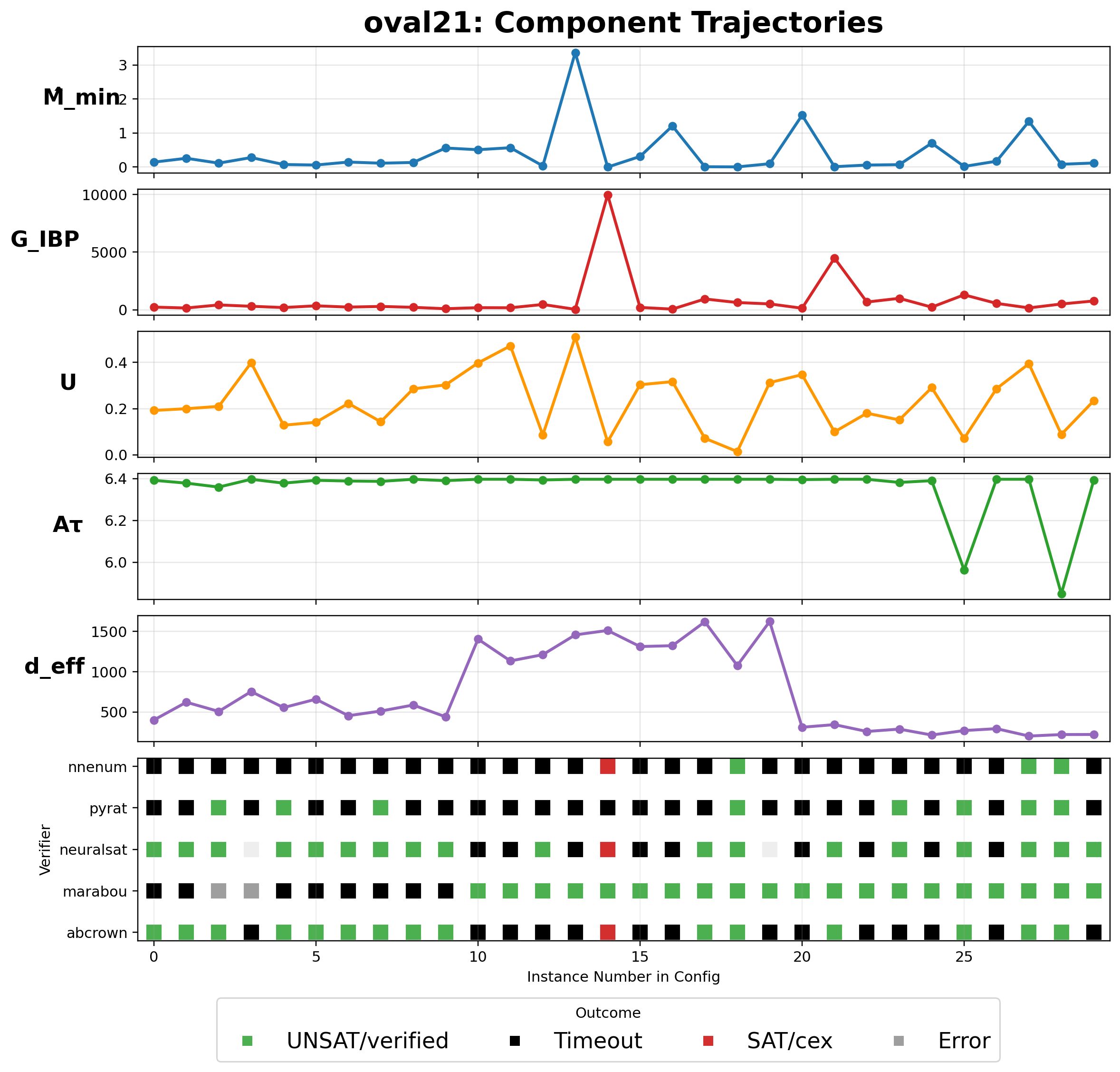}
    \caption{Difficulty profile component estimates and verifier outcomes for each of the $90$ instances in the Oval$21$ VNNCOMP benchmark. Each instance had a timeout limit of $360$ seconds per~verifier.}
    \label{fig:oval_traj}
\end{figure}

Finally, we also report the distribution of runtimes in Figure~\ref{fig:var}, corresponding to the reporting of averages in Figure~\ref{fig:veriStress_results}.

\end{document}